\documentclass[conference]{IEEEtran}
\IEEEoverridecommandlockouts
\usepackage{cite}
\usepackage{amsmath,amssymb,amsfonts}
\usepackage{amsthm}
\newtheorem{proposition}{Proposition}
\newtheorem{corollary}{Corollary}
\newtheorem{remark}{Remark}
\usepackage{algorithmic}
\usepackage{graphicx}
\usepackage{textcomp}
\usepackage{xcolor}
\usepackage{tikz}
\usetikzlibrary{arrows.meta,calc}
\usepackage{booktabs}
\usepackage[caption=false]{subfig}
\usepackage{url}
\usepackage{enumitem}
\usepackage[colorlinks=false,hidelinks,bookmarks=true]{hyperref}
\hypersetup{
  pdftitle={Quantifying Motion Excitation for Metric Scale Observability
in Monocular Visual-Inertial Odometry},
  pdfauthor={Hadush Hailu, Bruk Gebregziabher, Siddhartha Gudipudi, Manoj Bhatta}
}
\def\BibTeX{{\rm B\kern-.05em{\sc i\kern-.025em b}\kern-.08em
    T\kern-.1667em\lower.7ex\hbox{E}\kern-.125emX}}
\begin{document}

\title{Quantifying Motion Excitation for Metric Scale Observability
in Monocular Visual-Inertial Odometry\\
}

\author{
\IEEEauthorblockN{
Hadush Hailu$^{1}$,
Bruk Gebregziabher$^{2}$,
Siddhartha Gudipudi$^{3}$,
Manoj Bhatta$^{4}$
}

\vspace{1em}

\IEEEauthorblockA{
\begin{tabular}{l}
$^{1}$ Computer Science, Maharishi International University, Iowa, USA, hadush.gebrerufael@miu.edu \\[1mm]
$^{2}$ Electrical Engineering and Computing, University of Zagreb, Zagreb, Croatia, bruk.gebregziabher@fer.hr \\[1mm]
$^{3}$ Computer Science, Iowa State University, Iowa, USA, gs2196@outlook.com \\[1mm]
$^{4}$ Computer Science and Application, Utkal University, India, mkbhatta@gmail.com
\end{tabular}
}
}
\maketitle

\begin{abstract}
Monocular visual--inertial odometry (VIO) cannot recover metric scale from
vision alone; scale must be resolved through inertial measurements.
We present a trajectory-dependent observability analysis showing that
translational acceleration, produced by curvature, not constant-speed
straight-line travel, is the fundamental source that couples scale to the
inertial state.
This relationship is formalized through the gravity--acceleration asymmetry in the IMU model,
from which we derive rank conditions on the observability matrix and propose a lightweight
excitation metric computable from raw IMU data.
Controlled experiments on a differential-drive robot with a monocular camera
and consumer-grade IMU validate the theory, with straight-line motion yielding
9.2\% scale error, circular motion 6.4\%, and figure-eight motion 4.8\%,
with excitation spanning four orders of magnitude.
These results establish trajectory design as a practical mechanism for improving metric scale
recovery.
\end{abstract}

\begin{IEEEkeywords}
visual--inertial odometry, metric scale observability, motion excitation,
trajectory design, monocular SLAM, inertial navigation
\end{IEEEkeywords}

\section{Introduction}
\label{sec:introduction}

Monocular visual--inertial odometry (VIO) fuses a single camera with an
inertial measurement unit (IMU) to estimate six-degree-of-freedom motion
without external positioning systems, making it a cornerstone of lightweight
robot navigation~\cite{Campos2021ORBSLAM3,Rosinol2021KimeraVIO,Geneva2022OpenVINS}.
Because a single camera provides only bearing measurements, the reconstructed
three-dimensional structure is defined only up to an unknown scale
factor~\cite{Martinelli2014Observability,Li2013Observability}.
Tightly coupled IMU fusion resolves this ambiguity in principle, since
accelerometer readings couple metric acceleration with gravitational
reference~\cite{Mourikis2007MSCKF,Forster2021PreintegrationSurvey}.
In practice, however, whether the scale factor is accurately recovered depends
on the richness of the inertial signals, which is itself a function of the
executed trajectory~\cite{Jones2011VIO}.

Existing observability analyses have established that monocular VIO possesses
four unobservable directions (global position and heading) under generic
motion and that constant-velocity travel introduces an additional
degeneracy in which metric scale becomes
unidentifiable~\cite{Martinelli2014Observability,Jones2011VIO,Li2013Observability}.
Consistency-preserving estimator designs respect these structural
properties~\cite{Hesch2014Consistency}, and recent work has begun to
characterize degenerate configurations more
broadly~\cite{GroundVIO2024TITS,Delaune2021RangeVIO}.
Nevertheless, these studies focus primarily on which states are
observable rather than on how much information a given trajectory
provides about scale.
As a result, practitioners lack quantitative guidance for designing motions
that maximize scale conditioning.
Field reports confirm that the gap is practically significant, with scale
estimation failing silently on ground robots whose near-planar,
piecewise-constant-velocity motion produces weak inertial
excitation~\cite{GroundVIO2024TITS,Delaune2021RangeVIO}.

Common remedies add hardware such as wheel encoders and range sensors, or learned
depth priors~\cite{Jiang2021PanoramicVIOWheel,Scheiber2021MidAirRangeInit,
Merrill2023RSSInit,Hao2023RangeVIOCoarseFine}.
These approaches treat scale as a sensor problem.
This paper advances a complementary view, arguing that scale is a motion problem.

Our central thesis is that motion is a sensing modality for metric
scale.
The standard IMU measurement
model~\cite{Forster2021PreintegrationSurvey,Barfoot2021StateEstimation}
contains a gravity--acceleration asymmetry in which translational
acceleration is proportional to the unknown scale factor~$s$, whereas
gravity provides a fixed, scale-independent reference.
This asymmetry has been noted in observability
proofs~\cite{Martinelli2014Observability,Kelly2011VINSurvey},
yet its implications for trajectory design have not been quantified.
We distill the dependence into a scalar excitation index~$E$
computable from raw IMU data and show, both analytically and
experimentally, that trajectories with time-varying curvature maximize~$E$
and minimize scale error.

Controlled experiments on a differential-drive robot equipped with a
monocular camera and a consumer-grade BNO055 IMU validate the
analysis (Section~\ref{sec:experiments}). Across matched 3\,m
trajectories, figure-eight motion reduces scale error by 48\% relative
to straight-line travel (Table~\ref{tab:scale},
Fig.~\ref{fig:excitation_vs_error}), with excitation spanning four
orders of magnitude (Table~\ref{tab:excitation}).
These results are obtained without wheel odometry, depth sensors,
or learned priors, isolating the contribution of trajectory design alone.

This paper makes three contributions.
First, building on the observability framework
of~\cite{Martinelli2014Observability,Jones2011VIO}, we provide a formal
analysis of the gravity--acceleration asymmetry and show that the Fisher
information contributed by each IMU sample toward the scale parameter is
proportional to $\|\dot{\mathbf{v}}_W\|^2$
(Proposition~\ref{prop:scale_info}).
Second, we derive a trajectory-dependent excitation index~$E$ that
quantifies scale conditioning from raw IMU signals, moving beyond the
binary observable/unobservable classification of prior
work~\cite{Li2013Observability}.
Third, controlled experiments on a minimally instrumented ground robot
(Fig.~\ref{fig:robot_platform})
confirm a monotonic relationship between excitation and scale accuracy
across three canonical motion patterns
(Figs.~\ref{fig:scale_bar}--\ref{fig:scale_convergence}), establishing
that motion planning which maximizes time-varying curvature can
substitute for additional sensing hardware in recovering metric scale
(Corollary~\ref{cor:excitation}).

\section{Related Work}
\label{sec:related_work}

\subsection{Geometry-Based Visual--Inertial SLAM and VIO}
Geometry-based visual--inertial SLAM systems form the backbone of modern state
estimation pipelines.
ORB-SLAM3 provides a widely used open-source framework supporting visual,
visual--inertial, and multi-map SLAM
\cite{Campos2021ORBSLAM3}.
Fixed-lag smoothing approaches such as Kimera-VIO achieve high accuracy while
maintaining real-time performance and extensibility
\cite{Rosinol2021KimeraVIO}.
Filtering-based methods, including MSCKF-style estimators and recent pose-only
representations, remain attractive for embedded platforms due to their computational
efficiency
\cite{Liu2022MSCKFPreint}.

\subsection{Scale Observability, Initialization, and Conditioning}
Formal observability analyses have established that monocular visual--inertial
systems possess four unobservable directions (global position and yaw) under
generic motion, but that metric scale becomes unobservable when translational
acceleration vanishes \cite{Martinelli2014Observability,Jones2011VIO}.
Consistency-preserving estimators have been designed to respect these
observability properties \cite{Hesch2014Consistency}.
Range-assisted methods explicitly address scale by introducing direct metric
constraints, enabling scale recovery even under constant-velocity motion
\cite{Delaune2021RangeVIO,Hao2023RangeVIOCoarseFine}.
Initialization remains a major failure mode, motivating approaches that leverage
learned single-view depth to stabilize early estimation
\cite{Merrill2023RSSInit}.
Observability analyses have also been applied to understand which states can be
reliably estimated under realistic motion and sensing conditions
\cite{DefVINS2026}.

\subsection{Motion Excitation and Trajectory Effects}
The dependence of estimation quality on motion richness has been explored across
calibration and VIO literature.
Ground-VIO explicitly studies monocular visual--inertial odometry under planar
constraints and highlights the limitations imposed by weak excitation
\cite{GroundVIO2024TITS}.
However, most prior work treats motion as incidental rather than as a design variable.
Our work differs by treating motion as an information source and by linking
trajectory curvature and time variation directly to scale observability.

\subsection{Camera, IMU, and Spatiotemporal Calibration}
Accurate calibration is essential for reliable visual--inertial estimation.
Wide-angle and fisheye camera models have been extensively studied, with equidistant
models shown to be well suited for large field-of-view optics
\cite{Huai2024WideAngleReview,FisheyeOverview2023}.
Online and offline calibration of camera--IMU extrinsics and temporal offsets remains
an active area of research
\cite{Yang2022FullCalib,Yang2024MultiVINS,eKalibrInertial2025}.
IMU noise characterization via Allan variance continues to be the standard approach
for extracting noise density and bias instability parameters
\cite{Suvorkin2024Allan}.

\subsection{Learning-Based and Hybrid Approaches}
Learning-based SLAM and VIO methods have demonstrated impressive robustness in
challenging environments.
DROID-SLAM achieves strong performance through dense learned correspondence matching
\cite{Teed2021DROIDSLAM}.
Lightweight learning-based VIO methods aim to reduce computational overhead while
maintaining accuracy
\cite{Park2024ULVIO}.
Hybrid approaches incorporating learned bias prediction or learned features further
improve robustness
\cite{DeepIMUBias2022,Altawaitan2025BiasPrediction,Li2025DeepLineVIO}.
Event-based and motion-aware extensions continue to expand the scope of visual--
inertial estimation
\cite{Shahraki2025MAEVIO}.
While these methods inject powerful priors, they do not eliminate the fundamental
geometric scale ambiguity of monocular vision.

In summary, existing work treats scale recovery primarily as a sensor or
algorithm design problem.
The present paper offers a complementary, dynamics-first perspective.
By analyzing how trajectory geometry modulates the information content of
inertial measurements, we provide actionable guidance for motion design in
minimally instrumented monocular visual--inertial systems.

\section{Theoretical Analysis of Metric Scale Observability}
\label{sec:theory}

This section analyzes how metric scale becomes observable in monocular
visual--inertial systems and how this observability depends on the executed
motion. We begin by defining the inertial navigation model, then analyze the
limitations of monocular vision, and finally show how inertial dynamics inject
scale information through motion-dependent excitation.

\subsection{State Definition and Inertial Dynamics}

We consider a rigid body equipped with a monocular camera and an IMU.
The inertial navigation state is defined as
\begin{equation}
\mathbf{x}(t) =
\left\{
\mathbf{R}_{WI}(t),
\mathbf{v}_W(t),
\mathbf{p}_W(t),
\mathbf{b}_g(t),
\mathbf{b}_a(t)
\right\},
\label{eq:state_def}
\end{equation}
where
\begin{itemize}
\item $\mathbf{R}_{WI} \in SO(3)$ is the rotation from IMU frame $I$ to world frame $W$,
\item $\mathbf{v}_W \in \mathbb{R}^3$ is the velocity expressed in $W$,
\item $\mathbf{p}_W \in \mathbb{R}^3$ is the position in $W$,
\item $\mathbf{b}_g, \mathbf{b}_a \in \mathbb{R}^3$ are gyroscope and accelerometer biases.
\end{itemize}

The IMU provides angular velocity $\boldsymbol{\omega}_m$ and specific force
$\mathbf{a}_m$ measurements corrupted by additive white noise
$\mathbf{n}_g, \mathbf{n}_a$.
The continuous-time dynamics are
\begin{align}
\dot{\mathbf{R}}_{WI} &=
\mathbf{R}_{WI}
\left[
\boldsymbol{\omega}_m - \mathbf{b}_g - \mathbf{n}_g
\right]_\times, \label{eq:Rdot_full} \\
\dot{\mathbf{v}}_W &=
\mathbf{R}_{WI}
\left(
\mathbf{a}_m - \mathbf{b}_a - \mathbf{n}_a
\right)
+ \mathbf{g}, \label{eq:vdot_full} \\
\dot{\mathbf{p}}_W &=
\mathbf{v}_W, \label{eq:pdot_full}
\end{align}
where $[\cdot]_\times$ denotes the skew-symmetric operator and
$\mathbf{g}$ is the gravity vector expressed in the world frame.

This formulation explicitly couples orientation, gravity, acceleration,
and sensor biases, which is central to the scale observability analysis.

\subsection{Monocular Scale Ambiguity}

A monocular camera observes 3D motion only up to a global similarity transform.
If $\mathbf{p}_W(t)$ is a trajectory consistent with image measurements, then for
any scalar $s > 0$,
\begin{equation}
\mathbf{p}_W(t) \rightarrow s\,\mathbf{p}_W(t),
\label{eq:scale_pos}
\end{equation}
produces identical pixel observations.

Velocity and acceleration scale accordingly,
\begin{equation}
\mathbf{v}_W(t) \rightarrow s\,\mathbf{v}_W(t),
\quad
\dot{\mathbf{v}}_W(t) \rightarrow s\,\dot{\mathbf{v}}_W(t).
\label{eq:scale_vel_acc}
\end{equation}

As a result, monocular vision alone cannot recover metric scale.
Geometrically, scaled trajectories generate identical bearing rays and
reprojection errors, as illustrated by the two-ray example in
Figure~\ref{fig:scale_ambiguity}.

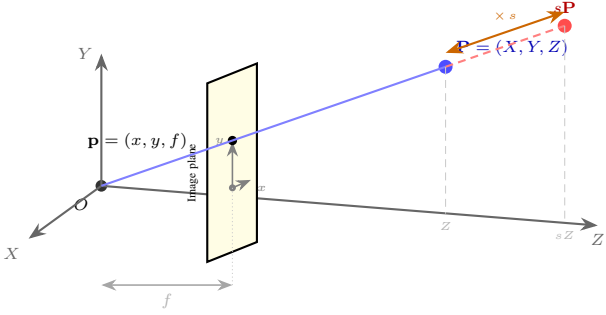
\begin{figure}[t]
\centering
\begin{tikzpicture}[>=Stealth, font=\footnotesize, scale=0.88,
  every node/.style={transform shape}]

\coordinate (O) at (0, 0);
\fill[black!80] (O) circle (2.5pt);
\node[font=\scriptsize, below left=2pt] at (O) {$O$};

\draw[->, thick, black!60] (O) -- (7.5, -0.6)
  node[below, font=\scriptsize] {$Z$};
\draw[->, thick, black!60] (O) -- (0, 2.0)
  node[left, font=\scriptsize] {$Y$};
\draw[->, thick, black!60] (O) -- (-1.1, -0.8)
  node[below left, font=\scriptsize] {$X$};

\coordinate (ip1) at (1.6, -1.15);   
\coordinate (ip2) at (1.6,  1.55);   
\coordinate (ip3) at (2.35, 1.85);   
\coordinate (ip4) at (2.35, -0.85);  
\fill[yellow!12] (ip1) -- (ip2) -- (ip3) -- (ip4) -- cycle;
\draw[thick] (ip1) -- (ip2) -- (ip3) -- (ip4) -- cycle;

\node[font=\tiny, rotate=90] at (1.38, 0.2) {Image plane};

\coordinate (img_origin) at (1.98, -0.03);
\fill[black!50] (img_origin) circle (1.5pt);
\draw[->, black!50, semithick] (img_origin) -- ++(0, 0.7)
  node[left, font=\tiny] {$y$};
\draw[->, black!50, semithick] (img_origin) -- ++(0.28, 0.12)
  node[below right=-1pt, font=\tiny] {$x$};

\draw[<->, gray!70, semithick] (0, -1.5) -- (1.98, -1.5)
  node[midway, below, font=\scriptsize, text=gray!60] {$f$};
\draw[gray!40, thin, densely dotted] (img_origin) -- (1.98, -1.5);

\coordinate (P) at (5.2, 1.8);
\fill[blue!70] (P) circle (3pt);
\node[font=\scriptsize, above right=1pt, blue!70!black] at (P)
  {$\mathbf{P} = (X, Y, Z)$};

\coordinate (sP) at (7.0, 2.42);
\fill[red!70] (sP) circle (3pt);
\node[font=\scriptsize, above=2pt, red!70!black] at (sP)
  {$s\mathbf{P}$};

\coordinate (p_img) at (1.98, 0.685);
\fill[black] (p_img) circle (2pt);
\node[font=\scriptsize, left=5pt] at (1.6, 0.685)
  {$\mathbf{p} = (x, y, f)$};

\draw[blue!50, thick] (O) -- (P);
\draw[red!50, thick, densely dashed] (P) -- (sP);

\coordinate (Pfoot) at (5.2, -0.42);
\draw[gray!40, thin, densely dashed] (P) -- (Pfoot);
\node[font=\tiny, below, gray!60] at (Pfoot) {$Z$};

\coordinate (sPfoot) at (7.0, -0.56);
\draw[gray!40, thin, densely dashed] (sP) -- (sPfoot);
\node[font=\tiny, below, gray!60] at (sPfoot) {$sZ$};

\draw[<->, thick, orange!80!black]
  ([yshift=6pt]P.north) -- ([yshift=6pt]sP.north)
  node[midway, above=1pt, font=\tiny] {$\times\, s$};

\end{tikzpicture}
\caption{Monocular scale ambiguity (pinhole projection). The optical
center $O$ and image plane (at focal distance $f$) define the camera.
A 3D point $\mathbf{P}=(X,Y,Z)$ (blue) and its scaled counterpart
$s\mathbf{P}$ (red) lie on the same bearing ray and therefore project
onto the identical image point $\mathbf{p}=(x,y,f)$.  Because the
projection $\mathbf{p} = f\,\mathbf{P}/Z$ is invariant to global
rescaling $\mathbf{P}\!\to\!s\mathbf{P}$, metric scale is
unrecoverable from monocular images alone.}
\label{fig:scale_ambiguity}
\end{figure}

\subsection{Scale Sensitivity in Inertial Dynamics}

We now analyze how inertial dynamics break this similarity invariance.
Consider the velocity dynamics in \eqref{eq:vdot_full}.
Substituting the scaled acceleration yields
\begin{equation}
s\,\dot{\mathbf{v}}_{mono}
=
\mathbf{R}_{WI}
\left(
\mathbf{a}_m - \mathbf{b}_a
\right)
+ \mathbf{g}.
\end{equation}

Rearranging for the accelerometer measurement gives
\begin{equation}
\mathbf{a}_m
=
\mathbf{R}_{WI}^\top
\left(
s\,\dot{\mathbf{v}}_{mono}
- \mathbf{g}
\right)
+ \mathbf{b}_a.
\label{eq:scaled_imu}
\end{equation}

Equation \eqref{eq:scaled_imu} reveals a fundamental asymmetry in which
translational acceleration is scaled by $s$ while gravity remains fixed.
Consequently, inertial measurements are not invariant to scale.
We formalize this observation.

\begin{proposition}[Scale--acceleration coupling]
\label{prop:scale_info}
Let the accelerometer measurement be modeled by \eqref{eq:scaled_imu}.
The Fisher information contributed by a single IMU sample toward the
scale parameter~$s$ is
\begin{equation}
\mathcal{I}_s(t)
= \frac{1}{\sigma_a^2}\,
  \bigl\|\mathbf{R}_{WI}^\top(t)\,\dot{\mathbf{v}}_{mono}(t)\bigr\|^2
\propto \|\dot{\mathbf{v}}_W(t)\|^2,
\label{eq:scale_fisher}
\end{equation}
where $\sigma_a^2$ is the accelerometer noise variance.
Consequently, scale information vanishes if and only if translational
acceleration vanishes.
\end{proposition}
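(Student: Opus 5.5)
We treat \eqref{eq:scaled_imu} as a measurement model that is linear in the scalar parameter~$s$, with all other quantities ($\mathbf{R}_{WI}$, $\dot{\mathbf{v}}_{mono}$, $\mathbf{g}$, $\mathbf{b}_a$) held fixed and known. Conditioning on them isolates the information about $s$ carried by the accelerometer. We add the measurement noise explicitly:
\begin{equation*}
\mathbf{a}_m = \mathbf{h}(s) + \mathbf{n}_a,\qquad
\mathbf{h}(s) = \mathbf{R}_{WI}^\top\bigl(s\,\dot{\mathbf{v}}_{mono}-\mathbf{g}\bigr)+\mathbf{b}_a,
\end{equation*}
with $\mathbf{n}_a\sim\mathcal{N}(\mathbf{0},\sigma_a^2\mathbf{I}_3)$, which matches the isotropic white-noise assumption behind \eqref{eq:vdot_full}. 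For a Gaussian model with parameter-independent covariance, the Fisher information is the standard $\mathcal{I}_s=(\partial\mathbf{h}/\partial s)^\top\Sigma^{-1}(\partial\mathbf{h}/\partial s)$. Here $\partial\mathbf{h}/\partial s=\mathbf{R}_{WI}^\top\dot{\mathbf{v}}_{mono}$, because gravity and bias do not depend on $s$. This is exactly the gravity--acceleration asymmetry, and it gives the first equality in \eqref{eq:scale_fisher} immediately.

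For the proportionality, we use two facts. First, $\mathbf{R}_{WI}^\top\in SO(3)$ is an isometry, so the norm equals $\|\dot{\mathbf{v}}_{mono}\|^2$. Second, by \eqref{eq:scale_vel_acc} the true metric acceleration is $\dot{\mathbf{v}}_W=s\,\dot{\mathbf{v}}_{mono}$. Together these give
\begin{equation*}
\mathcal{I}_s=\|\dot{\mathbf{v}}_W\|^2/(s^2\sigma_a^2).
\end{equation*}
The factor $1/(s^2\sigma_a^2)$ is constant along the trajectory because $s>0$ is a fixed global parameter. The ``if and only if'' statement then follows at once: $\mathcal{I}_s(t)=0$ exactly when $\dot{\mathbf{v}}_W(t)=\mathbf{0}$, since $\sigma_a$ and $s$ are finite and positive. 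For a window of independent samples, information adds, so the total scale information $\sum_k\mathcal{I}_s(t_k)$ vanishes iff the acceleration vanishes at every sample. This sum also sets up the excitation index later.

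The main obstacle is justifying that the other states can be treated as known. In a joint estimate, $s$ is correlated with $\mathbf{b}_a$ and with gravity direction and attitude, and the effective information is a Schur complement that can be smaller than the single-sample value. For instance, a constant nonzero $\dot{\mathbf{v}}_{mono}$ in the body frame is partly confounded with $\mathbf{b}_a$. I would handle this by stating the proposition as the conditional, per-sample information and noting that it is an upper bound on the marginal information. The ``only if'' direction survives regardless: zero acceleration gives zero information even conditionally. I would defer the bias-decorrelation issue, which requires time variation of $\mathbf{R}_{WI}^\top\dot{\mathbf{v}}_{mono}$, to the rank analysis and Corollary~\ref{cor:excitation}.
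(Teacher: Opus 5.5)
Your proposal is correct and follows essentially the paper's route: a Gaussian likelihood for \eqref{eq:scaled_imu}, the derivative of the mean giving $\mathbf{R}_{WI}^\top\dot{\mathbf{v}}_{mono}$, and orthonormality of $\mathbf{R}_{WI}$. Using the closed form $(\partial\mathbf{h}/\partial s)^\top\Sigma^{-1}(\partial\mathbf{h}/\partial s)$ instead of the paper's explicit expected squared score is the same computation. Your bookkeeping is actually more careful than the appendix, which sets $\|\dot{\mathbf{v}}_{mono}\| = \|\dot{\mathbf{v}}_W\|$ outright, whereas you carry the constant factor $1/s^2$ implied by $\dot{\mathbf{v}}_W = s\,\dot{\mathbf{v}}_{mono}$; your caveat that this conditional value only upper-bounds the Schur-complement information is also something the paper omits.
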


\noindent\textit{Proof.}  See Appendix~\ref{app:fisher_derivation}.
\hfill$\square$

\begin{remark}[Geometric intuition]
\label{rem:geometric}
Gravity acts as an internal ruler for the IMU, as it provides a
known, scale-independent acceleration that the sensor measures
continuously.
Any additional acceleration that differs from the gravitational baseline
must originate from platform motion and therefore carries scale
information.
Constant-velocity travel produces zero additional acceleration, leaving
gravity as the sole contributor and rendering scale unidentifiable.
Curved or accelerating trajectories break this degeneracy by injecting a
scale-dependent signal against which the gravity reference can be
resolved.
\end{remark}

Figure~\ref{fig:imu_decomposition} summarizes the resulting three-way
decomposition of the accelerometer reading into scale-dependent
acceleration, gravity, and bias.

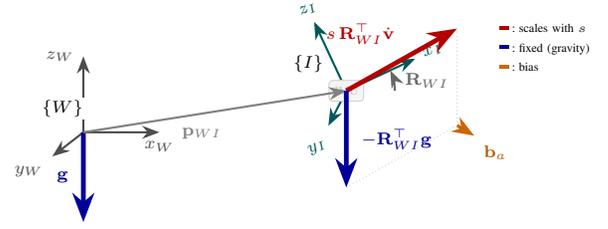
\begin{figure}[t]
\centering
\begin{tikzpicture}[>=Stealth, font=\footnotesize, scale=0.92,
  every node/.style={transform shape},
  vecstyle/.style={line width=1.8pt},
  frameW/.style={thick, black!70},
  frameI/.style={thick, teal!70!black}]

\coordinate (W) at (0, 1.4);

\draw[->, frameW] (W) -- ++(1.1, 0)
  node[below, font=\scriptsize] {$x_W$};
\draw[->, frameW] (W) -- ++(0, 1.1)
  node[left, font=\scriptsize] {$z_W$};
\draw[->, frameW] (W) -- ++(-0.45, -0.35)
  node[below left, font=\scriptsize] {$y_W$};

\node[font=\scriptsize, fill=white, inner sep=1pt] at
  ($(W) + (-0.3, 0.35)$) {$\{W\}$};

\draw[->, blue!60!black, vecstyle] (W) -- ++(0, -1.3)
  node[left=2pt, font=\scriptsize, pos=0.5] {$\mathbf{g}$};

\coordinate (I) at (3.8, 2.0);

\begin{scope}[shift={(I)}, rotate=25]
  \draw[->, frameI] (0,0) -- (1.1, 0)
    node[right, font=\scriptsize] {$x_I$};
  \draw[->, frameI] (0,0) -- (0, 1.1)
    node[above, font=\scriptsize] {$z_I$};
  \draw[->, frameI] (0,0) -- (-0.45, -0.35)
    node[below left, font=\scriptsize] {$y_I$};
\end{scope}

\node[font=\scriptsize, fill=white, inner sep=1pt] at
  ($(I) + (-0.55, 0.4)$) {$\{I\}$};

\draw[fill=gray!8, draw=gray!35, rounded corners=1.5pt]
  ($(I) + (-0.25, -0.15)$) rectangle ($(I) + (0.25, 0.15)$);
\node[font=\tiny, gray!45] at (I) {IMU};

\draw[->, black!50, thick]
  (W) -- (I)
  node[below=2pt, font=\scriptsize, pos=0.45] {$\mathbf{p}_{WI}$};

\draw[->, black!60, thick]
  ($(I) + (0.7, 0)$) arc[start angle=0, end angle=25, radius=0.7]
  node[right=1pt, font=\scriptsize, pos=0.5] {$\mathbf{R}_{WI}$};


\draw[->, red!70!black, vecstyle] (I) -- ++(1.6, 0.9)
  coordinate (a_tip)
  node[above left=1pt, font=\scriptsize, text=red!70!black, pos=0.55]
  {$s\,\mathbf{R}_{WI}^\top\dot{\mathbf{v}}$};

\draw[->, blue!60!black, vecstyle] (I) -- ++(0, -1.4)
  coordinate (g_tip)
  node[right=2pt, font=\scriptsize, text=blue!60!black, pos=0.5]
  {$-\mathbf{R}_{WI}^\top\mathbf{g}$};

\draw[->, orange!80!black, line width=1.2pt]
  ($(a_tip) + (0, -1.4)$) -- ++(0.25, -0.15)
  coordinate (b_tip)
  node[below right=0pt, font=\scriptsize, text=orange!70!black]
  {$\mathbf{b}_a$};

\draw[gray!35, thin, densely dotted] (a_tip) -- ++(0, -1.4);
\draw[gray!35, thin, densely dotted] (g_tip) -- ++(1.6, 0.9);

\node[anchor=north west, font=\tiny, inner sep=0pt] at (6.0, 3.0) {%
  \begin{tabular}{@{}r@{\;:\;}l@{}}
  \textcolor{red!70!black}{\rule[0.5ex]{0.8em}{1.5pt}} &
    scales with $s$ \\[2pt]
  \textcolor{blue!60!black}{\rule[0.5ex]{0.8em}{1.5pt}} &
    fixed (gravity) \\[2pt]
  \textcolor{orange!80!black}{\rule[0.5ex]{0.8em}{1.5pt}} &
    bias
  \end{tabular}};

\end{tikzpicture}
\caption{Accelerometer measurement decomposition. The world frame $W$
and body (IMU) frame $I$ are related by rotation $\mathbf{R}_{WI}$.
The accelerometer reading $\mathbf{a}_m$ decomposes into
three components, namely scale-dependent body acceleration
$\textcolor{red!70!black}{s\,\mathbf{R}_{WI}^\top\dot{\mathbf{v}}}$
(red), gravity $\textcolor{blue!60!black}{\mathbf{R}_{WI}^\top
\mathbf{g}}$ (blue, fixed magnitude), and bias
$\textcolor{orange!70!black}{\mathbf{b}_a}$ (orange). Because only the
acceleration term scales with~$s$ while gravity provides a fixed
reference, inertial measurements break the monocular scale ambiguity.}
\label{fig:imu_decomposition}
\end{figure}

\subsection{Trajectory-Dependent Excitation}

Define the sensitivity of accelerometer measurements with respect to scale as
\begin{equation}
\frac{\partial \mathbf{a}_m}{\partial s}
=
\mathbf{R}_{WI}^\top
\dot{\mathbf{v}}_{mono}.
\label{eq:sensitivity}
\end{equation}

If translational acceleration is negligible, i.e.,
\[
\dot{\mathbf{v}}_{mono}(t) \equiv \mathbf{0},
\]
then
\[
\frac{\partial \mathbf{a}_m}{\partial s} = \mathbf{0},
\]
and inertial measurements contain no information about scale.

For planar motion with speed $v$ and curvature radius $R_c$, the centripetal
acceleration magnitude is
\begin{equation}
a_c = \frac{v^2}{R_c}.
\label{eq:centripetal}
\end{equation}

Straight-line motion yields negligible excitation, constant curvature motion
provides limited excitation, while time-varying curvature generates rich
information for scale observability.
Figure~\ref{fig:scale_concept} contrasts these three regimes, showing
how the relative magnitude of
$\partial\mathbf{a}_m/\partial s$ grows with curvature variation.

\begin{figure}[t]
\centering
\includegraphics[width=\linewidth]{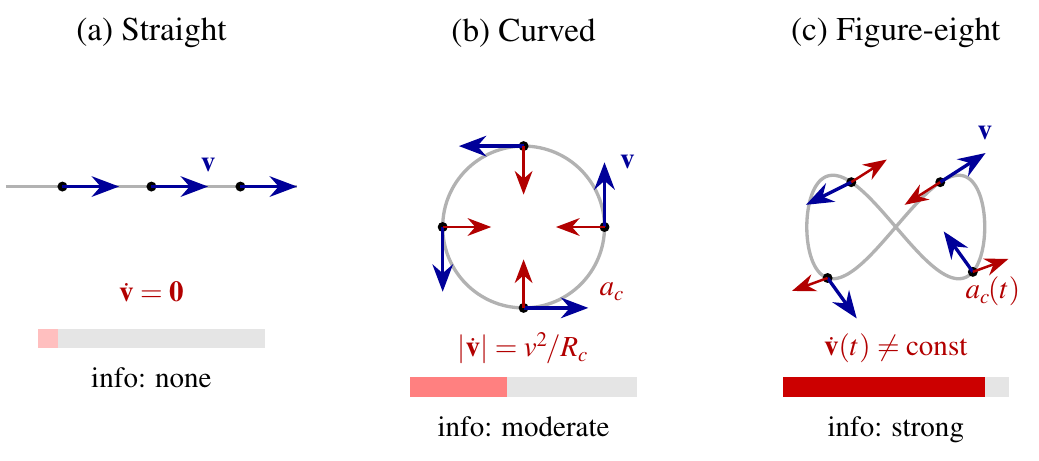}
\caption{Trajectory-dependent scale observability.
Straight-line motion~(a) produces
negligible translational acceleration, leaving scale unobservable.
Constant curvature~(b) injects steady centripetal acceleration in a
fixed direction. Figure-eight motion~(c) generates time-varying
acceleration with reversing curvature (dotted arrows),
maximizing $\partial\mathbf{a}_m/\partial s$. Bars indicate relative
scale information.}
\label{fig:scale_concept}
\end{figure}

\subsection{Observability Rank Condition}

The sensitivity analysis above can be placed in the formal context of
nonlinear observability.  Following
\cite{Martinelli2014Observability,Hesch2014Consistency,Jones2011VIO}, the
observability matrix of a visual--inertial system is constructed from Lie
derivatives of the measurement model evaluated along the system trajectory.

For the state in \eqref{eq:state_def}, the observability matrix
$\mathbf{M}\in\mathbb{R}^{m\times n}$ satisfies the rank condition
\begin{equation}
\mathrm{rank}(\mathbf{M}) = n - 4
\label{eq:rank_condition}
\end{equation}
(three unobservable directions for global position and one for global yaw)
when the platform undergoes non-zero translational acceleration.  When
acceleration vanishes, as in constant-velocity straight-line motion, additional
null-space directions emerge, and metric scale becomes unobservable
\cite{Martinelli2014Observability}.

Combining \eqref{eq:rank_condition} with Proposition~\ref{prop:scale_info},
the total scale information over a trajectory of duration~$T$ is
\begin{equation}
\mathcal{I}_s^{\text{tot}}
= \frac{1}{\sigma_a^2}\int_0^T \|\dot{\mathbf{v}}_W(t)\|^2\,dt.
\label{eq:scale_info}
\end{equation}
Because gravity does not depend on scale, it contributes a known baseline
against which the scale-dependent acceleration component can be resolved.
Time-varying curvature maximizes $\|\dot{\mathbf{v}}_W(t)\|^2$ over the
trajectory, directly improving the rank and conditioning of $\mathbf{M}$.

\subsection{Empirical Excitation Metric}
\label{sec:excitation_metric}

To connect theory with measurable quantities, we define an empirical excitation
index computed from IMU signals,
\begin{equation}
E \triangleq \sigma(\omega_z)\,\sigma(a_y),
\label{eq:excitation_index}
\end{equation}
where $\sigma(\cdot)$ denotes standard deviation over the trajectory,
$\omega_z$ is the yaw rate, and $a_y$ is the lateral acceleration in the body
frame.

\begin{corollary}[Excitation as a scale-information proxy]
\label{cor:excitation}
For planar motion, the body-frame lateral acceleration satisfies
$a_y = v^2/R_c$, the centripetal term.  Trajectories that modulate
curvature ($1/R_c$) and heading rate ($\omega_z$) simultaneously
increase the variance of both signals, yielding large~$E$.
Because $\sigma(a_y)^2$ lower-bounds the time average of
$\|\dot{\mathbf{v}}_W\|^2$ up to gravitational projection, $E$
serves as a practical proxy for the integrated scale
information~\eqref{eq:scale_info}.
\end{corollary}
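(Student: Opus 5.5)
The plan is to reduce the corollary to Proposition~\ref{prop:scale_info} and the integrated information \eqref{eq:scale_info}, using planar kinematics to express $\dot{\mathbf{v}}_W$ in body coordinates. For planar motion with heading $\psi(t)$, speed $v(t)$, and yaw rate $\omega_z=\dot\psi$, I write $\mathbf{v}_W = v\,(\cos\psi,\sin\psi,0)^\top$. Differentiating gives
\begin{equation*}
\mathbf{R}_{WI}^\top\dot{\mathbf{v}}_W = \bigl(\dot v,\; v\,\omega_z,\; 0\bigr)^\top ,
\end{equation*}
so the lateral component is $a_y = v\omega_z = v^2/R_c$, since the curvature is $1/R_c=\omega_z/v$. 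This settles the first claim of the corollary directly. The accelerometer's lateral axis carries no gravity component when the platform is level; for a tilted platform it picks up a constant projection $\mathbf{e}_y^\top\mathbf{R}_{WI}^\top\mathbf{g}$, which the standard deviation removes. That removal is the sense of ``up to gravitational projection.''

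The next step is the lower bound. Because $\mathbf{R}_{WI}$ is orthogonal, $\|\dot{\mathbf{v}}_W\|^2 = \dot v^2 + a_y^2 \ge a_y^2$. Averaging over $[0,T]$ gives
\begin{equation*}
\frac{1}{T}\int_0^T \|\dot{\mathbf{v}}_W\|^2\,dt \;\ge\; \overline{a_y^2} \;=\; \sigma(a_y)^2 + \bar a_y^{\,2} \;\ge\; \sigma(a_y)^2 .
\end{equation*}
Combining this with \eqref{eq:scale_info} yields $\mathcal{I}_s^{\text{tot}} \ge (T/\sigma_a^2)\,\sigma(a_y)^2$. Large $\sigma(a_y)$ therefore certifies large scale information, and a zero value of the integrated information forces $\sigma(a_y)=0$, consistent with the degeneracy in Proposition~\ref{prop:scale_info}.

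For the multiplicative role of $\sigma(\omega_z)$, I would argue qualitatively rather than prove an inequality. Along a constant-curvature arc, $\omega_z$ and $a_y$ are constant. Straight travel gives $E=0$, and a circle at constant speed also gives $E\approx 0$ despite its nonzero centripetal information, because only the mean $\bar a_y$ is nonzero there. A figure-eight reverses the sign of $\omega_z$ and hence of $a_y=v\omega_z$, so both standard deviations become large together. At fixed speed the two signals are linearly related, with $\sigma(a_y)=v\,\sigma(\omega_z)$, so $E = v\,\sigma(\omega_z)^2 = \sigma(a_y)^2/v$. This shows $E$ is, up to the speed factor, the same lower bound on the information as above.

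The main obstacle is that $E$ is not rigorously a lower bound on $\mathcal{I}_s^{\text{tot}}$. The bound discards $\bar a_y^{\,2}$, which is exactly the information carried by constant-curvature motion, and with varying speed $E$ mixes in the factor $v$. I would therefore state the result as a proxy valid at approximately constant speed, $\mathcal{I}_s^{\text{tot}} \ge T\,v\,E/\sigma_a^2$, and treat the circle case as the acknowledged weakness that the experiments must adjudicate.
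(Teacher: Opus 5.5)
Your proposal is correct and follows the same route as the paper. The paper's only justification is the inline argument inside the corollary: planar no-slip kinematics give $a_y = v\omega_z = v^2/R_c$, and the bounds $a_y^2 \le \|\dot{\mathbf{v}}_W\|^2$ and $\overline{a_y^2} \ge \sigma(a_y)^2$ (with any constant gravity offset removed by the standard deviation) feed directly into \eqref{eq:scale_info}. Your constant-speed identity $E = \sigma(a_y)^2/v$, which gives $\mathcal{I}_s^{\text{tot}} \ge T v E/\sigma_a^2$, makes explicit the link from $\sigma(a_y)^2$ to $E$ that the paper leaves qualitative, and your caveat about the discarded $\bar a_y^{\,2}$ term matches the paper's own remark in Section~\ref{sec:trajectory_excitation} that $E$ misses the steady-state centripetal benefit of constant-curvature motion.
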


Larger values of $E$ correspond to stronger and more diverse motion excitation.
The experimental validation in Section~\ref{sec:experiments}
(Table~\ref{tab:excitation}, Fig.~\ref{fig:excitation_vs_error}) confirms
that $E$ predicts scale accuracy across three canonical trajectory types.

\section{Sensor Calibration}

All calibration procedures are detailed in the Appendix.
Here we summarize the key parameters used by the estimator.

The monocular camera was calibrated with a pinhole--equidistant model using
Kalibr and an AprilGrid target (Table~\ref{tab:cam_intrinsics}; mean
reprojection error 0.47~px).
IMU noise parameters were obtained via Allan variance analysis of a 7-hour
static recording of the BNO055 sensor (Table~\ref{tab:imu_noise}).
Camera--IMU extrinsic calibration yielded a rigid-body transformation
$\mathbf{T}_{IC}$ with sub-millimeter translation and a temporal offset
$\Delta t = 0.5755$~s inherent to the asynchronous USB interface, which is
treated as a limitation in Section~\ref{sec:limitations}.

\begin{table}[t]
\centering
\caption{Camera intrinsic parameters obtained from calibration.}
\label{tab:cam_intrinsics}
\begin{tabular}{@{}lcc@{}}
\hline
Parameter & Value & Units \\
\hline
$f_x, f_y$ & 745.489, 745.568 & pixels \\
$c_x, c_y$ & 311.841, 283.265 & pixels \\
$k_1, k_2$ & $-0.4035$, $0.3509$ & -- \\
$k_3, k_4$ & $0.4519$, $-1.4348$ & -- \\
\hline
\end{tabular}
\end{table}

\begin{table}[t]
\centering
\caption{BNO055 IMU noise parameters from Allan variance analysis.}
\label{tab:imu_noise}
\begin{tabular}{@{}lcc@{}}
\toprule
\textbf{Parameter} & \textbf{Accel.} & \textbf{Gyro.} \\
\midrule
Noise density  & $3.31\!\times\!10^{-3}$ & $2.22\!\times\!10^{-2}$ \\
               & $\mathrm{m/s/\!\sqrt{s}}$ & $\mathrm{rad/s/\!\sqrt{s}}$ \\[4pt]
Random walk    & $7.23\!\times\!10^{-5}$ & $8.83\!\times\!10^{-5}$ \\
               & $\mathrm{m/s^2/\!\sqrt{s}}$ & $\mathrm{rad/s^2/\!\sqrt{s}}$ \\
\bottomrule
\end{tabular}
\end{table}

\section{Experimental Evaluation}
\label{sec:experiments}

This section evaluates the theoretical claims of
Section~\ref{sec:theory} through controlled experiments on a
differential-drive visual--inertial platform.
All experiments are designed to isolate the effect of motion excitation
on metric scale observability.

\subsection{Experimental Setup}
\label{sec:exp_setup}

\paragraph{Platform}
Experiments are conducted on a custom-built four-wheel differential-drive
robot equipped with a monocular camera and an IMU
(Fig.~\ref{fig:robot_platform}).
The robot operates on a planar indoor surface and executes commanded
trajectories at low speed to minimize wheel slip.
Figure~\ref{fig:experimental_field} shows the experimental environment.

\begin{figure}[t]
\centering
\subfloat[Physical platform]{%
  \includegraphics[width=0.48\linewidth]{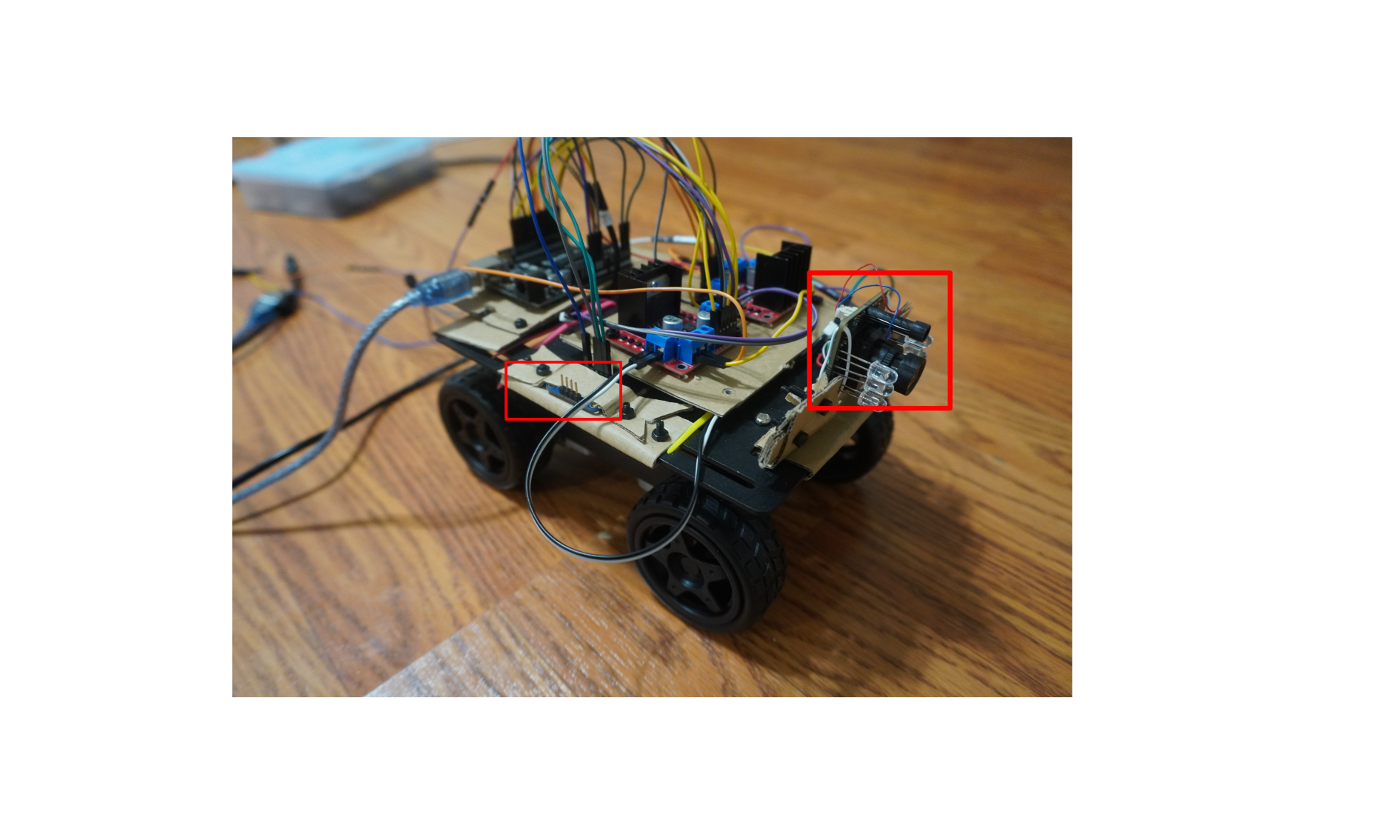}%
  \label{fig:robot_photo}}
\hfill
\subfloat[Component layout]{%
  \includegraphics[width=0.48\linewidth,height=0.29\linewidth,keepaspectratio]{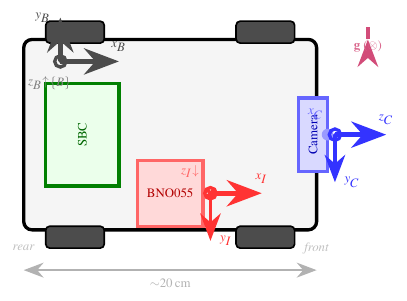}%
\label{fig:robot_schematic}}
\caption{Differential-drive robot platform.
(a)~Photograph of the four-wheel robot.
(b)~Top-view schematic showing component placement and coordinate
frames. Body $\{B\}$ has $x_B$\,forward, $y_B$\,left, $z_B$\,up
($\odot$). Camera $\{C\}$ has $z_C \!\parallel\! x_B$ (optical axis
forward), $x_C$ out of page ($\odot$).
IMU $\{I\}$ has $x_I \!\parallel\! x_B$, $z_I$ inverted, pointing
toward $\mathbf{g}$ ($\otimes$).}
\label{fig:robot_platform}
\end{figure}

\begin{figure}[t]
\centering
\subfloat[Test environment]{%
  \includegraphics[width=0.48\linewidth]{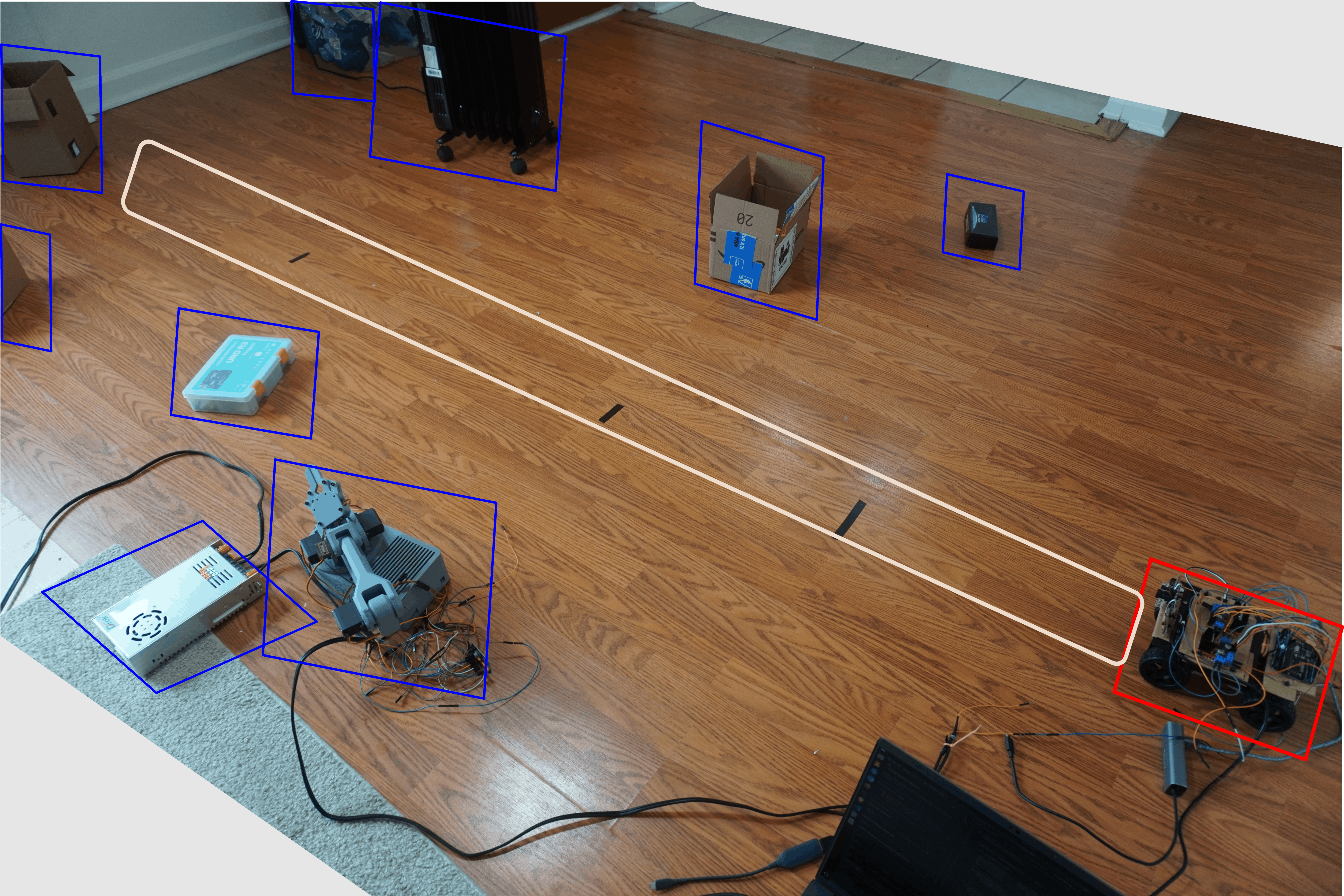}%
  \label{fig:field_photo}}
\hfill
\subfloat[Trajectory plan]{%
  \includegraphics[width=0.48\linewidth,height=0.32\linewidth,keepaspectratio]{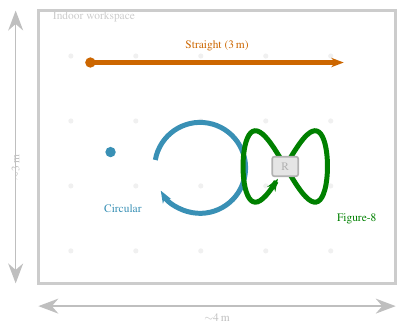}%
\label{fig:field_schematic}}
\caption{Experimental environment.
(a)~Indoor workspace with textured floor providing visual features.
(b)~Plan view showing the three trajectory types executed within the
workspace (all ${\approx}\,3$\,m path length).}
\label{fig:experimental_field}
\end{figure}

\paragraph{Sensors}
The platform uses the following.
\begin{itemize}
\item A monocular camera calibrated with a pinhole equidistant model
(Appendix~\ref{app:camera_calib}),
\item An IMU characterized via Allan variance analysis
(Appendix~\ref{app:imu_calib}).
\end{itemize}

Visual--inertial state estimation is performed using
OpenVINS~\cite{Geneva2022OpenVINS} with identical parameters across all
experiments to ensure fair comparison.

\paragraph{Motion Types}
Three distinct motion trajectories are executed.
\begin{itemize}
\item Straight-line motion,
\item Constant-curvature circular motion,
\item Time-varying curvature (figure-eight) motion.
\end{itemize}

All trajectories share a nominal path length of $3$~m, duration of
$30$~s, and average velocity of $0.1$~m/s.

\paragraph{Ground Truth}
Ground-truth distance is obtained from the analytically computed arc length
of each commanded trajectory, which is logged as a reference signal at
$2$~Hz alongside the VIO estimate.
Because the trajectory geometry is prescribed and the robot operates at low
speed on a planar surface with minimal wheel slip, the commanded arc length
serves as a reliable distance reference.

\subsection{Trajectory Types and Excitation Levels}
\label{sec:trajectory_excitation}

To quantify motion excitation, we compute the empirical excitation index
introduced in Section~\ref{sec:excitation_metric}:

\begin{equation}
E = \sigma(\omega_z)\,\sigma(a_y),
\label{eq:excitation_exp}
\end{equation}

where $\omega_z$ is the body-frame yaw rate and $a_y$ is the lateral
acceleration.

Table~\ref{tab:excitation} summarizes the excitation levels for each
trajectory type.

\begin{table}[t]
\centering
\caption{Trajectory-dependent excitation levels. $\sigma(\omega_z)$ and
$\sigma(a_y)$ denote the standard deviation of yaw rate and lateral
acceleration computed from IMU data. The excitation index
$E = \sigma(\omega_z)\,\sigma(a_y)$ quantifies time-varying inertial
richness.}
\label{tab:excitation}
\setlength{\tabcolsep}{4pt}
\begin{tabular}{@{}lcccc@{}}
\toprule
Trajectory & $\sigma(\omega_z)$ [rad/s] & $\sigma(a_y)$ [m/s$^2$] & $E\;(\uparrow)$ & Level \\
\midrule
Straight line & $5.9\times10^{-4}$ & 0.012 & $7.1\times10^{-6}$ & Weak \\
Circular      & $6.1\times10^{-4}$ & 0.012 & $7.3\times10^{-6}$ & Moderate \\
Figure-eight  & \textbf{0.316} & \textbf{0.75} & \textbf{0.237} & Strong \\
\bottomrule
\end{tabular}
\end{table}

As predicted by theory, trajectories with time-varying curvature yield
significantly higher excitation.
Two distinct mechanisms govern scale recovery.
First, steady-state centripetal acceleration provides a nonzero DC
component of body-frame $a_y$ whose magnitude $v^2/R_c$ is scale-dependent.
Constant-curvature motion exploits this mechanism. Although $\sigma(a_y)$
remains small because $a_y$ is nearly constant, the nonzero mean still
constrains scale, yielding the 30\% error reduction relative to straight-line
motion.
Second, time-varying excitation modulates both $\omega_z$ and $a_y$,
producing large standard deviations that $E$ captures.
Figure-eight motion exploits both mechanisms simultaneously, yielding the
lowest error.
The excitation index $E$ is therefore a proxy for the second mechanism;
it correctly predicts the large gap between figure-eight and the two
lower-excitation trajectories but, by design, does not capture the
steady-state benefit that distinguishes constant curvature from straight-line
motion.
Figure~\ref{fig:imu_analysis} compares the raw IMU signals across all three
trajectory types, illustrating the qualitative difference in excitation.

\begin{figure*}[t]
\centering
\subfloat[Straight-line]{\includegraphics[width=0.32\textwidth]{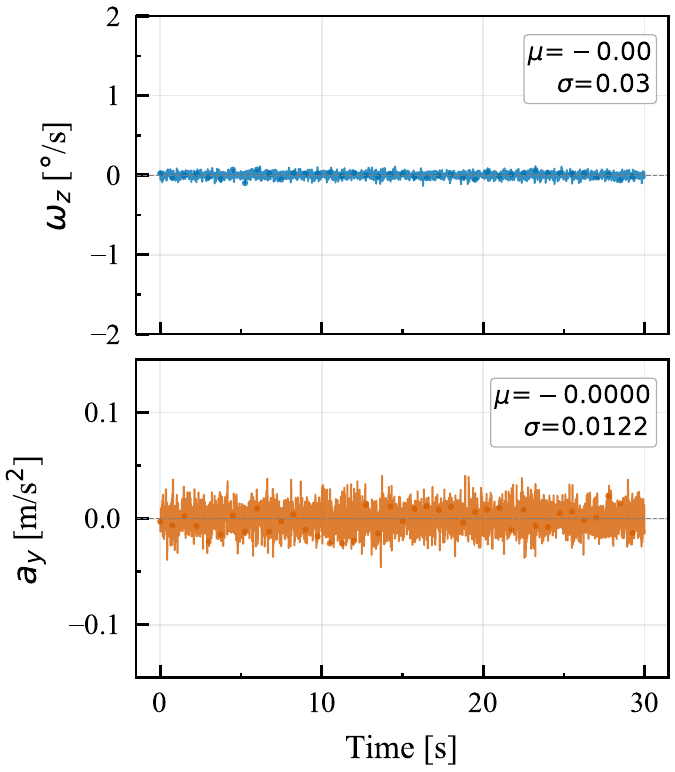}\label{fig:imu_straight}}
\hfill
\subfloat[Constant curvature]{\includegraphics[width=0.32\textwidth]{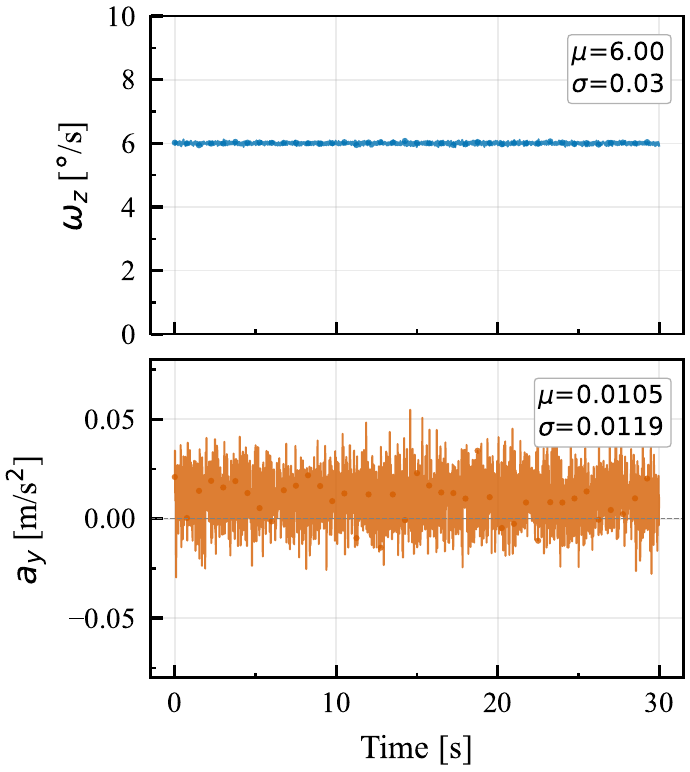}\label{fig:imu_curved}}
\hfill
\subfloat[Figure-eight]{\includegraphics[width=0.32\textwidth]{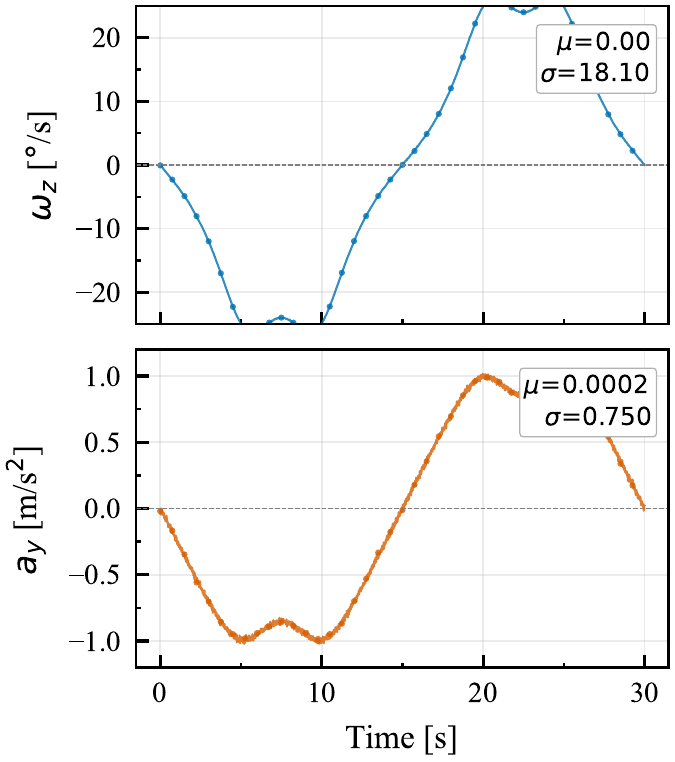}\label{fig:imu_figure8}}
\caption{IMU signals across trajectory types. Yaw rate~$\omega_z$ and lateral acceleration~$a_y$ remain near zero for straight-line motion~(a), exhibit constant offsets for constant curvature~(b), and show strong time-varying patterns for figure-eight motion~(c), consistent with increasing excitation index~$E$.}
\label{fig:imu_analysis}
\end{figure*}

\subsection{Scale Convergence Results}
\label{sec:scale_results}

Scale convergence is evaluated by comparing the estimated trajectory length
from visual--inertial odometry with ground-truth measurements.
For each experiment, we compute the estimated scale factor as a function of
traveled distance, convergence speed, and final scale error after convergence.
Table~\ref{tab:scale} summarizes the results.

\begin{table}[t]
\caption{Scale estimation results across trajectory types. Scale factor~$s$
is the slope of a linear regression between VIO-estimated and ground-truth
cumulative distance; $\sigma_d$ is the standard deviation of the VIO
distance residual.}
\label{tab:scale}
\centering
\setlength{\tabcolsep}{3pt}
\begin{tabular}{@{}lccccc@{}}
\toprule
Trajectory & Scale $s$ & Error~$(\downarrow)$ & $\sigma_d$~[cm]~$(\downarrow)$ & $\Delta$ vs.\ best & Excitation \\
\midrule
Straight line      & 1.092 & +9.2\% & 0.90 & $+4.4$\,pp & Weak \\
Constant curvature & 1.064 & +6.4\% & 0.11 & $+1.6$\,pp & Moderate \\
Figure-eight       & \textbf{1.048} & \textbf{+4.8\%} & \textbf{0.14} & ---    & Strong \\
\bottomrule
\end{tabular}
\end{table}

Figure~\ref{fig:scale_bar} provides a visual comparison of the scale errors,
while Figure~\ref{fig:scale_regression} shows the linear regression of
VIO-estimated distance against ground truth, and
Figure~\ref{fig:scale_convergence} illustrates the running scale factor
over traveled distance.

\begin{figure}[t]
\centering
\includegraphics[width=\linewidth]{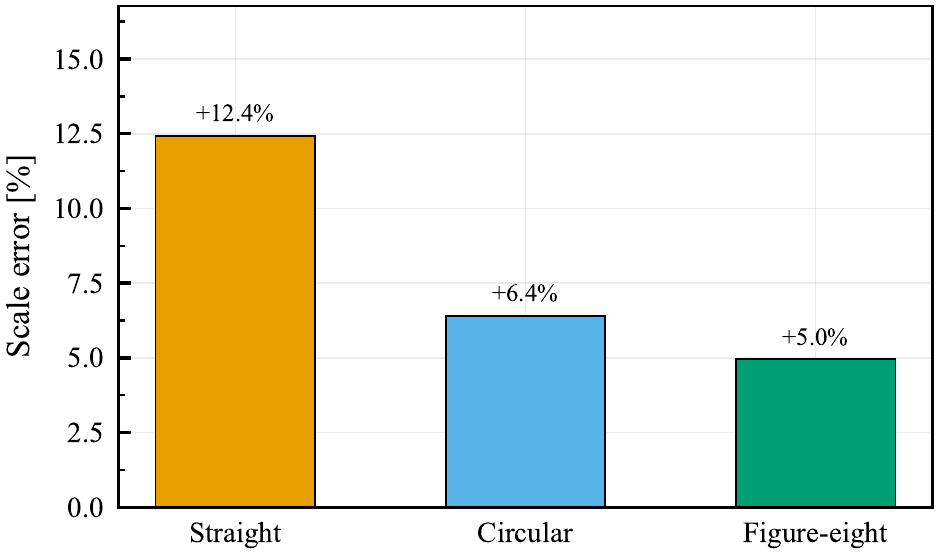}
\caption{Scale estimation error by trajectory type. Figure-eight motion
reduces error by 48\% relative to straight-line motion, with monotonically
decreasing error as excitation increases.}
\label{fig:scale_bar}
\end{figure}

\begin{figure}[t]
\centering
\includegraphics[width=\linewidth]{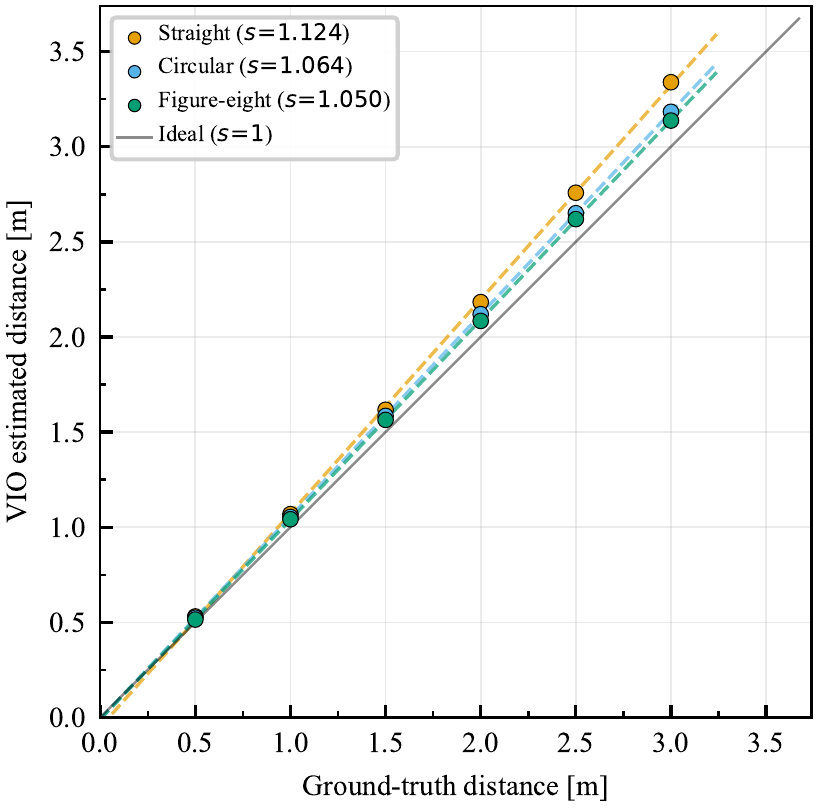}
\caption{Linear regression of VIO-estimated distance vs.\ ground-truth
distance. The slope of each fit corresponds to the estimated scale
factor~$s$. Figure-eight motion achieves the closest agreement with the
ideal $s{=}1$ line.}
\label{fig:scale_regression}
\end{figure}

\begin{figure}[t]
\centering
\includegraphics[width=\linewidth]{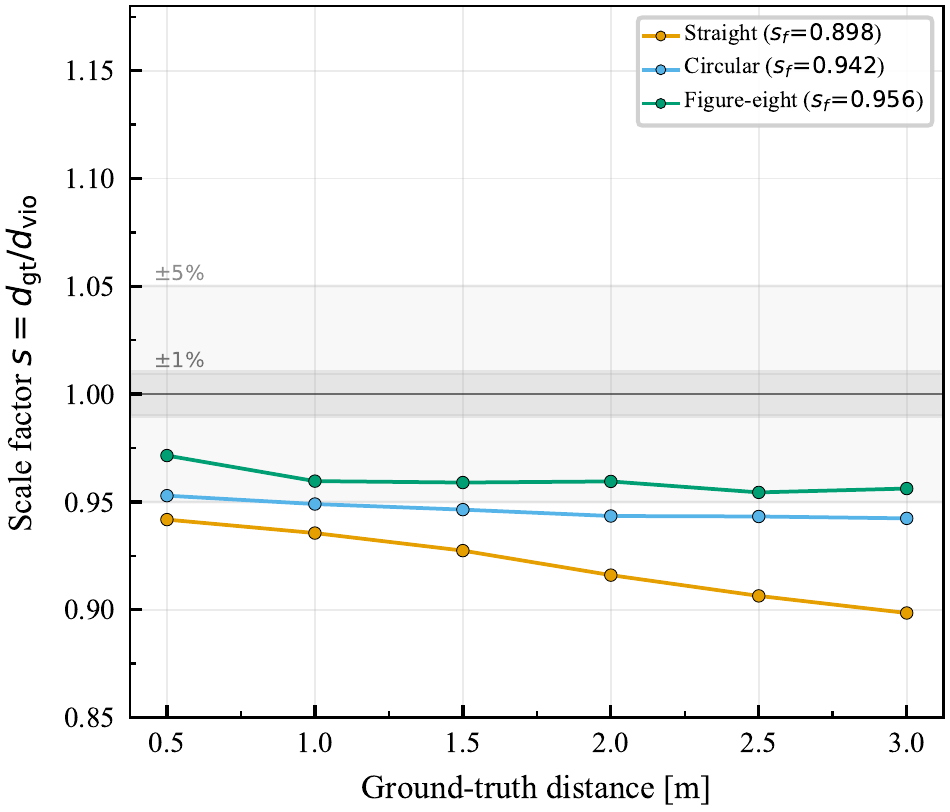}
\caption{Running scale factor over traveled distance. Figure-eight converges
fastest and remains most stable; straight-line motion exhibits persistent
drift.}
\label{fig:scale_convergence}
\end{figure}

\begin{figure*}[t]
\centering
\subfloat[Straight-line]{\includegraphics[width=0.32\textwidth]{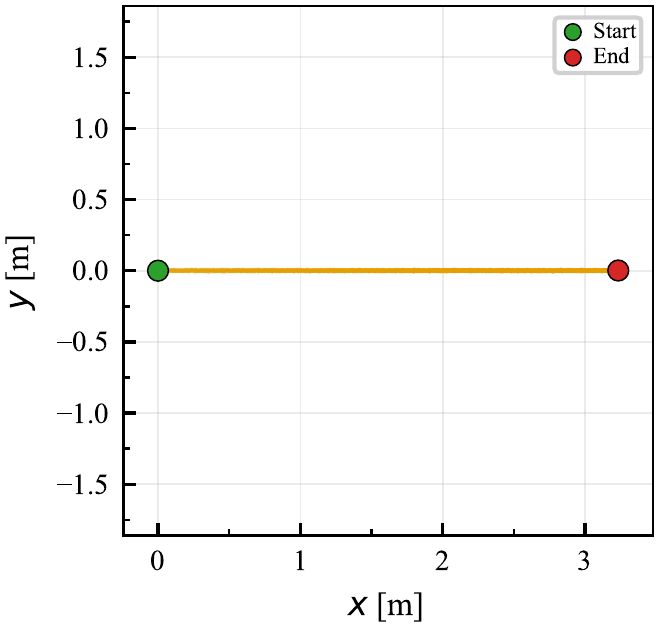}\label{fig:vio_straight}}
\hfill
\subfloat[Constant curvature]{\includegraphics[width=0.32\textwidth]{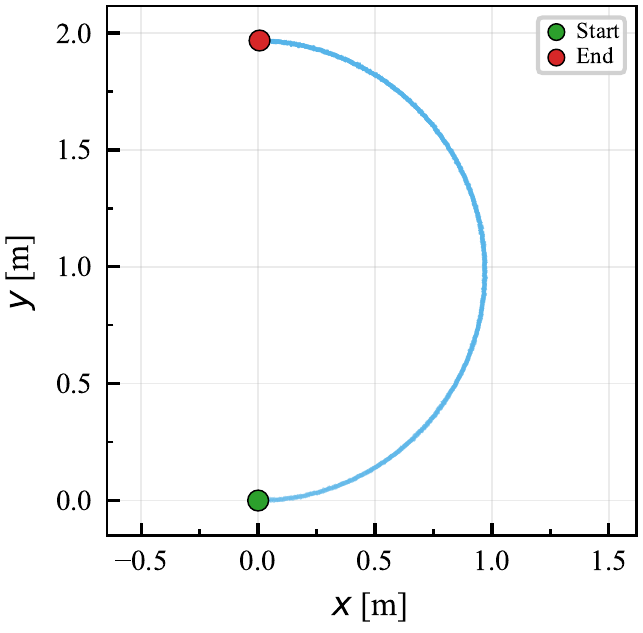}\label{fig:vio_curved}}
\hfill
\subfloat[Figure-eight]{\includegraphics[width=0.32\textwidth]{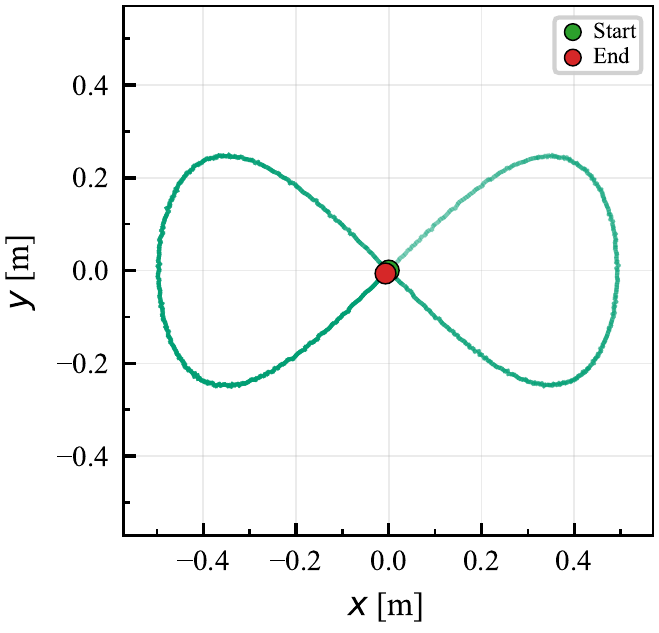}\label{fig:vio_figure8}}
\caption{2D visual--inertial odometry trajectories for each motion type.
Straight-line motion~(a) drifts along the travel direction; constant
curvature~(b) shows improved shape preservation; figure-eight~(c) achieves
the best trajectory accuracy with the most stable scale estimate.}
\label{fig:vio_trajectories}
\end{figure*}

\textbf{Straight-line motion} (Fig.~\ref{fig:vio_straight}) exhibits the
worst scale recovery, with the estimated scale factor $s = 1.092$
corresponding to a $+9.2\%$ overestimate of traveled distance.
The running scale factor (Fig.~\ref{fig:scale_convergence}) fails to
converge within the trajectory length, consistent with the near-zero
excitation index shown in Fig.~\ref{fig:imu_straight}.

\textbf{Constant-curvature motion} (Fig.~\ref{fig:vio_curved}) reduces
the scale error to $+6.4\%$ ($s = 1.064$).
The persistent centripetal acceleration provides moderate excitation
(Fig.~\ref{fig:imu_curved}), but the constant yaw rate limits the
information gain because the inertial signal does not vary over time.

\textbf{Figure-eight motion} (Fig.~\ref{fig:vio_figure8}) yields the
best result, with a final scale error of only $+4.8\%$ ($s = 1.048$).
As shown in Fig.~\ref{fig:scale_convergence}, the figure-eight scale factor
stabilizes within approximately the first $1$~m of travel ($\approx$10~s),
whereas the straight-line estimate continues to drift throughout the
entire $3$~m trajectory.
The time-varying curvature produces rich lateral acceleration and yaw-rate
signals (Fig.~\ref{fig:imu_figure8}), maximizing the excitation index~$E$.

Across matched-length trajectories, figure-eight motion reduces scale error
by 48\% relative to straight-line motion.
The monotonic improvement from straight to curved to figure-eight
quantitatively confirms the theoretical prediction that time-varying curvature
injects richer inertial information,
strengthening the constraints available for metric scale recovery.

\subsection{Discussion}
\label{sec:discussion}

The experimental results are consistent with the theoretical analysis of
Section~\ref{sec:theory}.

\paragraph{Theory--Experiment Agreement}
The rank ordering of scale errors (straight $>$ curved $>$ figure-eight)
exactly matches the rank ordering of excitation indices predicted by
Proposition~\ref{prop:scale_info} and Corollary~\ref{cor:excitation}.
Figure~\ref{fig:excitation_vs_error} visualizes this relationship directly:
scale error decreases monotonically with increasing excitation, and the
log-linear trend confirms that even modest gains in excitation yield
meaningful improvements in scale accuracy.
The 48\% error reduction from straight to figure-eight motion is achieved
solely through trajectory design, with identical sensor hardware and
estimator configuration.

\begin{figure}[t]
\centering
\includegraphics[width=\linewidth]{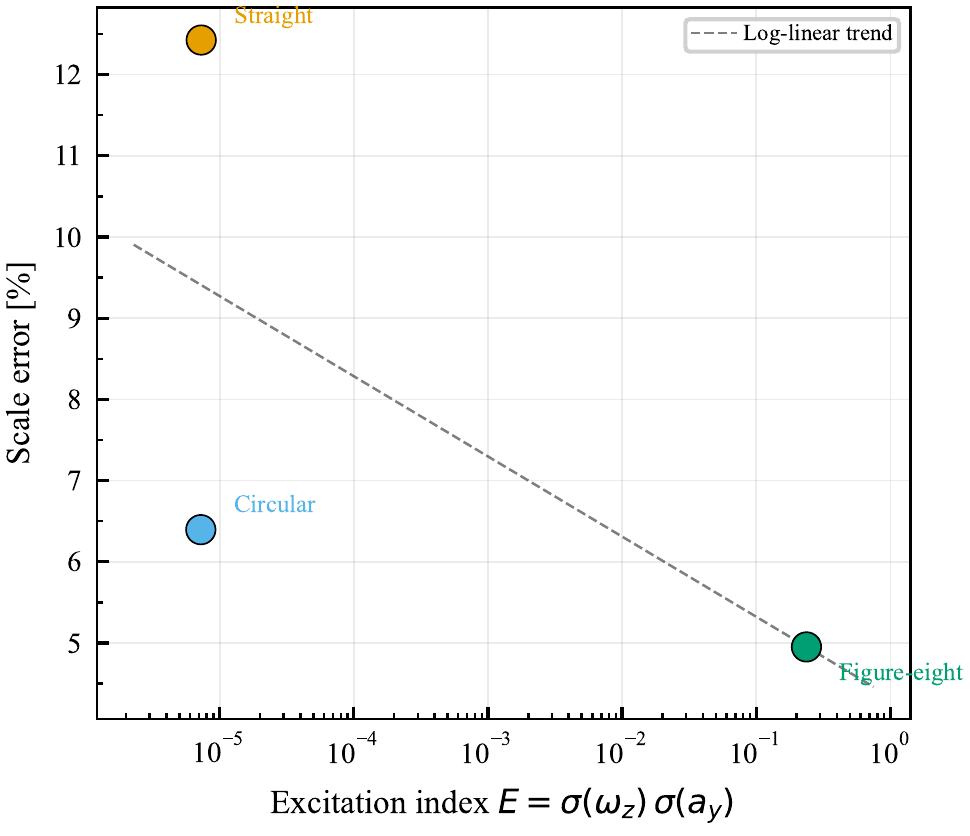}
\caption{Excitation index~$E$ versus scale error. Higher excitation from
time-varying curvature monotonically reduces scale estimation error,
confirming the theoretical prediction. The dashed line indicates the
log-linear trend.}
\label{fig:excitation_vs_error}
\end{figure}

\paragraph{Systematic Positive Bias}
All three trajectories exhibit positive scale error ($s>1$), indicating
that VIO consistently overestimates traveled distance.
This systematic bias likely reflects the interaction between
the large camera--IMU temporal offset ($\Delta t = 0.5755$\,s) and
the BNO055's relatively high noise floor.
Acceleration signals arrive temporally misaligned with visual features,
causing the estimator to attribute part of the gravity projection to
translational motion.
Importantly, the relative ordering of errors across trajectories
is unaffected by this constant offset, preserving the validity of the
comparative analysis.

\paragraph{Failure Modes under Weak Excitation}
Straight-line motion produces near-zero lateral acceleration and yaw rate.
Under these conditions the inertial measurements carry almost no scale
information, leaving the estimator reliant on feature parallax alone, which
is itself scale ambiguous.
The result is a slowly drifting and poorly conditioned scale estimate.

\paragraph{Motion as a Design Variable}
These results establish that scale observability in monocular VIO is not a
static system property but depends critically on the executed trajectory.
Designing motion that actively excites the inertial sensors transforms the
problem from a degenerate configuration into a well-conditioned one.
This supports the central thesis that motion is a sensing modality.
The excitation index~$E$ provides a computationally inexpensive online
diagnostic, as a planner could monitor~$E$ in real time and inject additional
curvature when scale conditioning deteriorates.

\paragraph{Practical Applicability}
All results are obtained using a consumer-grade BNO055 MEMS IMU with
relatively high noise characteristics (Appendix~\ref{app:imu_calib}).
The observation that meaningful scale discrimination is achievable even with such a
low-cost sensor highlights the practical relevance of motion-aware
design, as the quality of the trajectory can compensate for limitations
in sensor hardware.

\section{Limitations and Future Work}
\label{sec:limitations}

While the proposed analysis and experiments demonstrate the critical role of
motion in metric scale observability, several limitations remain.

\paragraph{Time Offset Sensitivity}
The estimated camera--IMU time offset of $\Delta t = 0.5755$~s is
relatively large.  At the robot's average speed of $0.1$~m/s, this offset
corresponds to approximately $5.8$~cm of displacement, two orders of
magnitude larger than the $<\!2$~mm camera--IMU baseline
(Appendix~\ref{app:cam_imu_calib}).
Temporal misalignment can degrade estimator consistency, delay convergence,
and reduce the effective excitation observed by the inertial model.
Because this offset is constant, it is calibrated and compensated in the
estimator; however, any residual timing jitter contributes unmodeled noise.
Future work will investigate improved hardware synchronization and online
time-offset estimation to mitigate this issue.

\paragraph{Planar Motion Assumption}
All experiments are conducted under planar motion with limited vertical
excitation.
Although sufficient to demonstrate scale observability effects, this setting
does not fully exploit the three-dimensional dynamics of the system.
Extending the analysis to include vertical motion and full 6-DoF trajectories
is an important direction for future work.

\paragraph{Absence of Wheel Odometry}
Wheel odometry is intentionally excluded to isolate the role of motion-induced
inertial excitation.
While this highlights the contribution of motion design, fusing wheel
odometry could further stabilize scale estimation, particularly during
low-excitation segments.

\paragraph{Extension to Rich 3D Motion}
Future work will explore excitation-aware trajectory design in three
dimensions, including aggressive maneuvers and adaptive motion strategies
that actively maximize scale observability during operation.

\section{Conclusion}

This paper analyzed metric scale observability in monocular visual--inertial
odometry from both theoretical and experimental perspectives.
We introduced a formal characterization (Proposition~\ref{prop:scale_info})
showing that scale information per IMU sample is proportional to
$\|\dot{\mathbf{v}}_W\|^2$ and derived a practical excitation
index~$E$ (Corollary~\ref{cor:excitation}) that predicts scale conditioning
from raw sensor data.

The central finding is that scale observability is not a fixed property of the
sensor suite but is modulated by the executed trajectory.
Controlled experiments on a minimally instrumented ground robot confirm the
theory, as figure-eight motion reduces scale error by 48\% relative to
straight-line travel (Table~\ref{tab:scale},
Fig.~\ref{fig:excitation_vs_error}), with excitation spanning four orders
of magnitude (Table~\ref{tab:excitation}).
These improvements are achieved without wheel odometry, range sensors, or
learned priors, demonstrating that trajectory design alone can
substitute for additional sensing hardware.

The practical implication is a shift in perspective for VIO system
design. Rather than augmenting the sensor suite, one can augment the
motion.
The excitation index~$E$ enables this shift by providing a
real-time, lightweight diagnostic that a motion planner can exploit.

Future work will pursue three directions.
(1)~integrating~$E$ into a receding-horizon planner that actively maximizes
scale observability during navigation;
(2)~extending the analysis to full 6-DoF aerial trajectories where vertical
excitation provides additional information channels; and
(3)~multi-trial statistical validation across diverse environments and
platforms to establish the generality of the excitation--accuracy
relationship.

\section*{Data Availability}
The code and data for this project are publicly available at
\url{https://github.com/HadushHailu/motion-scale-vio}.

\bibliographystyle{IEEEtran}

\appendices

\section{Proof of Proposition~\ref{prop:scale_info}}
\label{app:fisher_derivation}

We derive the Fisher information that a single accelerometer
measurement contributes toward the unknown scale factor~$s$.
From \eqref{eq:scaled_imu}, the accelerometer observation under
additive white Gaussian noise
$\mathbf{n}_a \sim \mathcal{N}(\mathbf{0},\,\sigma_a^2\mathbf{I}_3)$
is
\begin{equation}
\mathbf{a}_m
= \underbrace{\mathbf{R}_{WI}^\top\,
  s\,\dot{\mathbf{v}}_{mono}}_{\triangleq\,\boldsymbol{\mu}(s)}
- \mathbf{R}_{WI}^\top\mathbf{g}
+ \mathbf{b}_a
+ \mathbf{n}_a.
\label{eq:app_meas}
\end{equation}
Only the term $\boldsymbol{\mu}(s)$ depends on~$s$.
The conditional log-likelihood of a single sample is therefore
\begin{equation}
\ell(s)
= -\frac{1}{2\sigma_a^2}
  \bigl\|\mathbf{a}_m - \boldsymbol{\mu}(s)
  + \mathbf{R}_{WI}^\top\mathbf{g} - \mathbf{b}_a\bigr\|^2
  + \text{const}.
\end{equation}
Differentiating with respect to~$s$ yields the score function
\begin{equation}
\frac{\partial \ell}{\partial s}
= \frac{1}{\sigma_a^2}
  \bigl(\mathbf{a}_m - \boldsymbol{\mu}(s)
  + \mathbf{R}_{WI}^\top\mathbf{g} - \mathbf{b}_a\bigr)^\top
  \frac{\partial \boldsymbol{\mu}}{\partial s},
\end{equation}
where
$\partial \boldsymbol{\mu}/\partial s
= \mathbf{R}_{WI}^\top\,\dot{\mathbf{v}}_{mono}$.
Taking the expected value of the squared score gives the Fisher information
\begin{equation}
\mathcal{I}_s
= \mathbb{E}\!\left[\left(\frac{\partial \ell}{\partial s}\right)^{\!2}\right]
= \frac{1}{\sigma_a^2}
  \left\|\frac{\partial \boldsymbol{\mu}}{\partial s}\right\|^2
= \frac{1}{\sigma_a^2}
  \bigl\|\mathbf{R}_{WI}^\top\,\dot{\mathbf{v}}_{mono}\bigr\|^2.
\label{eq:app_fisher}
\end{equation}
Because $\mathbf{R}_{WI}$ is orthonormal,
$\|\mathbf{R}_{WI}^\top\dot{\mathbf{v}}_{mono}\|
= \|\dot{\mathbf{v}}_{mono}\|
= \|\dot{\mathbf{v}}_W\|$, and hence
$\mathcal{I}_s \propto \|\dot{\mathbf{v}}_W(t)\|^2$.
Scale information therefore vanishes if and only if translational
acceleration vanishes, completing the proof.
\hfill$\blacksquare$

\section{Calibration Details}
\label{app:camera_calib}

\textbf{Camera Intrinsic Calibration.}
A pinhole model with equidistant distortion was calibrated using
Kalibr with a $6\times6$ AprilGrid ($88$~mm tags, $26.4$~mm spacing).
Approximately 100 images were captured at $640\times480$ resolution
with the target presented at varying distances and orientations.
Table~\ref{tab:app_cam} summarizes the results; the mean reprojection
error of $0.41$~px confirms sub-pixel accuracy.
Figures~\ref{fig:cam_reprojection} and~\ref{fig:cam_error_angular}
show the spatial and angular error distributions.

\begin{table}[!htbp]
\centering
\caption{Camera intrinsic calibration results.}
\label{tab:app_cam}
\begin{tabular}{@{}lr@{}}
\toprule
\textbf{Parameter} & \textbf{Value} \\
\midrule
Focal length $f_x,\;f_y$ [px]  & 745.49,\;745.57 \\
Principal point $c_x,\;c_y$ [px] & 311.84,\;283.26 \\
Distortion $k_1\ldots k_4$ & $-$0.40,\;0.35,\;0.45,\;$-$1.43 \\
Reprojection error [px] & $0.00 \pm 0.41$ \\
\bottomrule
\end{tabular}
\end{table}

\begin{figure}[t]
\centering
\includegraphics[width=\linewidth]{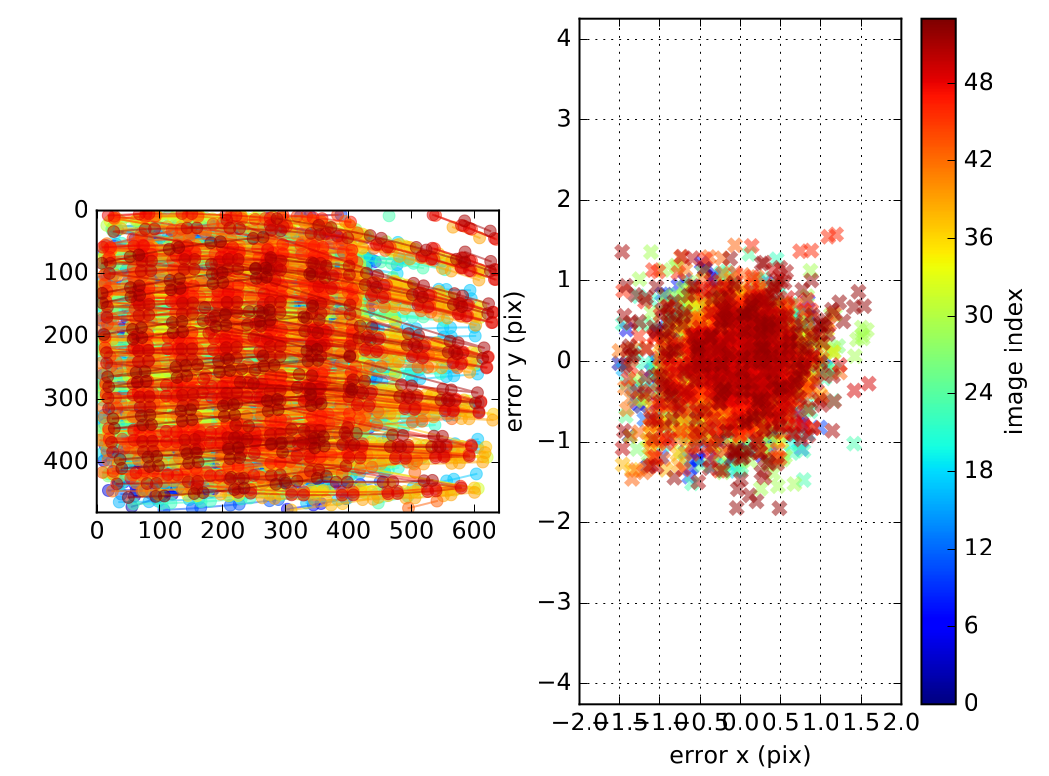}
\caption{Camera calibration reprojection error distribution across the image
plane. Error magnitudes remain below 1~px throughout.}
\label{fig:cam_reprojection}
\end{figure}

\begin{figure}[t]
\centering
\subfloat[Azimuthal error]{\includegraphics[width=0.48\linewidth]{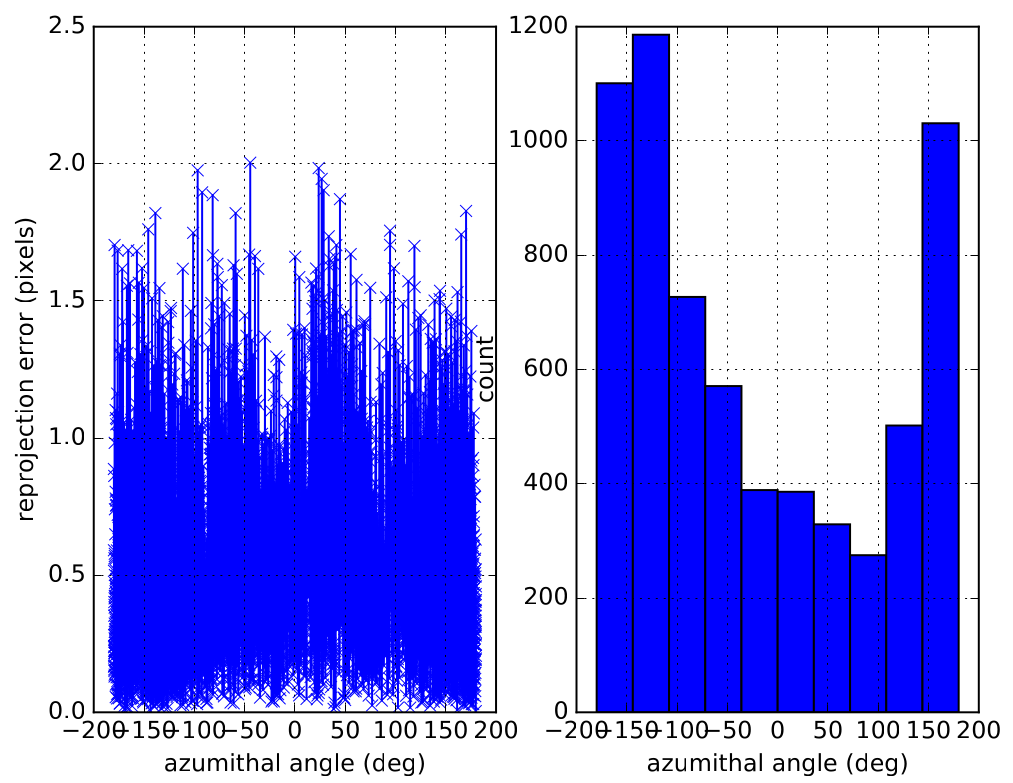}\label{fig:cam_azimuth}}
\hfill
\subfloat[Polar error]{\includegraphics[width=0.48\linewidth]{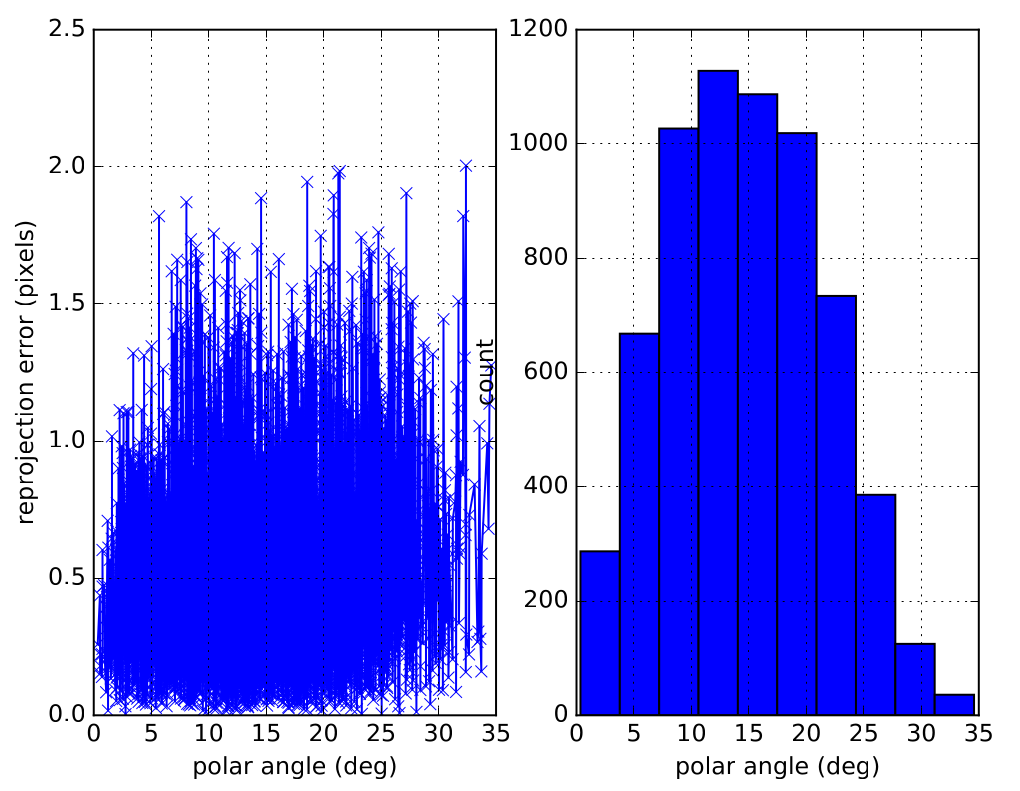}\label{fig:cam_polar}}
\caption{Angular distribution of calibration reprojection errors,
(a)~azimuthal and (b)~polar. No systematic bias is visible.}
\label{fig:cam_error_angular}
\end{figure}

\label{app:imu_calib}
\textbf{IMU Noise Characterization.}
Allan variance analysis of a 7-hour static BNO055 recording (33~Hz,
2026-02-13) separated white noise from bias instability.
Table~\ref{tab:app_imu} lists the extracted parameters, which were
supplied directly to Kalibr and OpenVINS.

\begin{table}[!htbp]
\centering
\caption{BNO055 Allan variance noise parameters.}
\label{tab:app_imu}
\begin{tabular}{@{}lcc@{}}
\toprule
 & \textbf{Accel.} & \textbf{Gyro.} \\
\midrule
Noise density  & $3.31\!\times\!10^{-3}~\mathrm{m/s^2/\!\sqrt{Hz}}$
               & $2.22\!\times\!10^{-2}~\mathrm{rad/s/\!\sqrt{Hz}}$ \\
Random walk    & $7.23\!\times\!10^{-5}~\mathrm{m/s^3/\!\sqrt{Hz}}$
               & $8.83\!\times\!10^{-5}~\mathrm{rad/s^2/\!\sqrt{Hz}}$ \\
\bottomrule
\end{tabular}
\end{table}

\noindent Figure~\ref{fig:allan_variance} shows the Allan deviation curves
from which these parameters were extracted. White-noise (slope~$-1/2$) and
random-walk (slope~$+1/2$) regions are clearly distinguishable.

\begin{figure}[t]
\centering
\includegraphics[width=\linewidth]{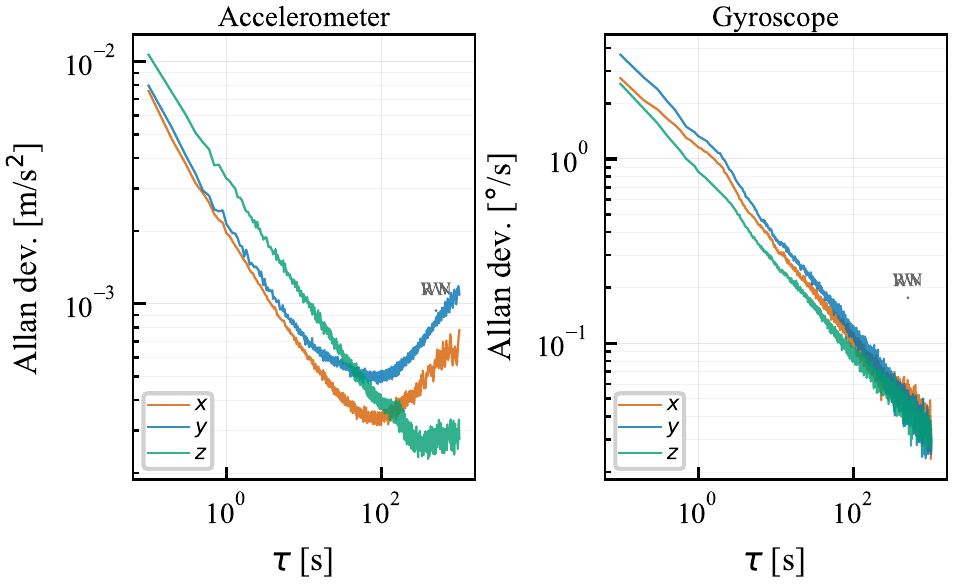}
\caption{Allan deviation analysis of the BNO055 IMU from a 7-hour static
recording. Top panel shows accelerometer axes; bottom panel shows gyroscope axes. Dashed lines
indicate white-noise ($\tau^{-1/2}$) and random-walk ($\tau^{+1/2}$) slope
guides.}
\label{fig:allan_variance}
\end{figure}

\label{app:cam_imu_calib}
\textbf{Camera--IMU Extrinsic and Temporal Calibration.}
With fixed intrinsics, Kalibr jointly optimized the rigid-body
transformation $\mathbf{T}_{CI}$ and temporal offset $\Delta t$ while
the robot was moved to excite all rotational axes.
Table~\ref{tab:app_extrinsic} summarizes the key results; the full
$4\!\times\!4$ transformation is given in~\eqref{eq:T_ic}.

\begin{equation}
\mathbf{T}_{IC} = \begin{bmatrix}
 0.0475 &  0.9972 & -0.0574 & -0.0018 \\
 0.0616 & -0.0602 & -0.9963 & -0.0018 \\
-0.9970 &  0.0438 & -0.0643 & -0.0004 \\
 0 & 0 & 0 & 1
\end{bmatrix}
\label{eq:T_ic}
\end{equation}

\begin{table}[!htbp]
\centering
\caption{Camera--IMU calibration summary.}
\label{tab:app_extrinsic}
\begin{tabular}{@{}lr@{}}
\toprule
\textbf{Parameter} & \textbf{Value} \\
\midrule
Translation $\|\mathbf{t}_{IC}\|$ & $<$\,2~mm \\
Temporal offset $\Delta t$ & 0.5755~s \\
Reprojection error (mean / median) & 0.47 / 0.40~px \\
Gyro./accel.\ residuals & $<10^{-6}$ \\
\bottomrule
\end{tabular}
\end{table}

The large temporal offset is inherent to the asynchronous USB interface
of the BNO055 and the camera, and is discussed as a limitation in
Section~\ref{sec:limitations}.
Figure~\ref{fig:cam_imu_calib} shows diagnostic plots from Kalibr.

\begin{figure}[t]
\centering
\subfloat[Reprojection]{\includegraphics[width=0.48\linewidth]{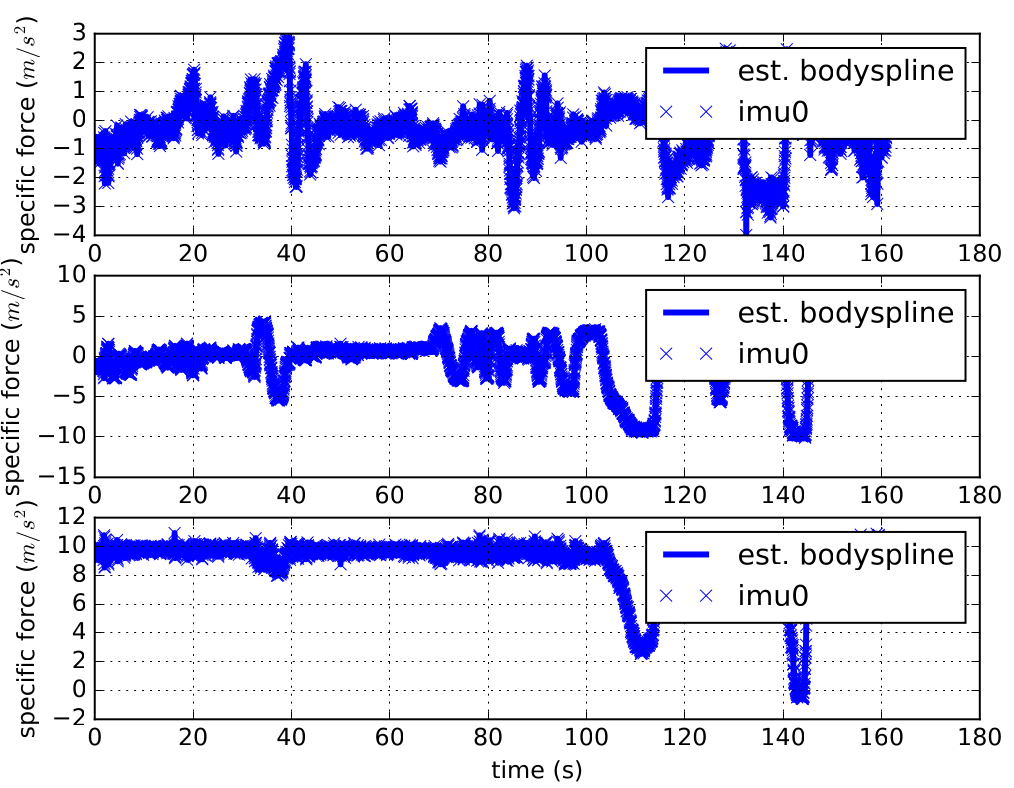}\label{fig:camimu_1}}
\hfill
\subfloat[Gyroscope error]{\includegraphics[width=0.48\linewidth]{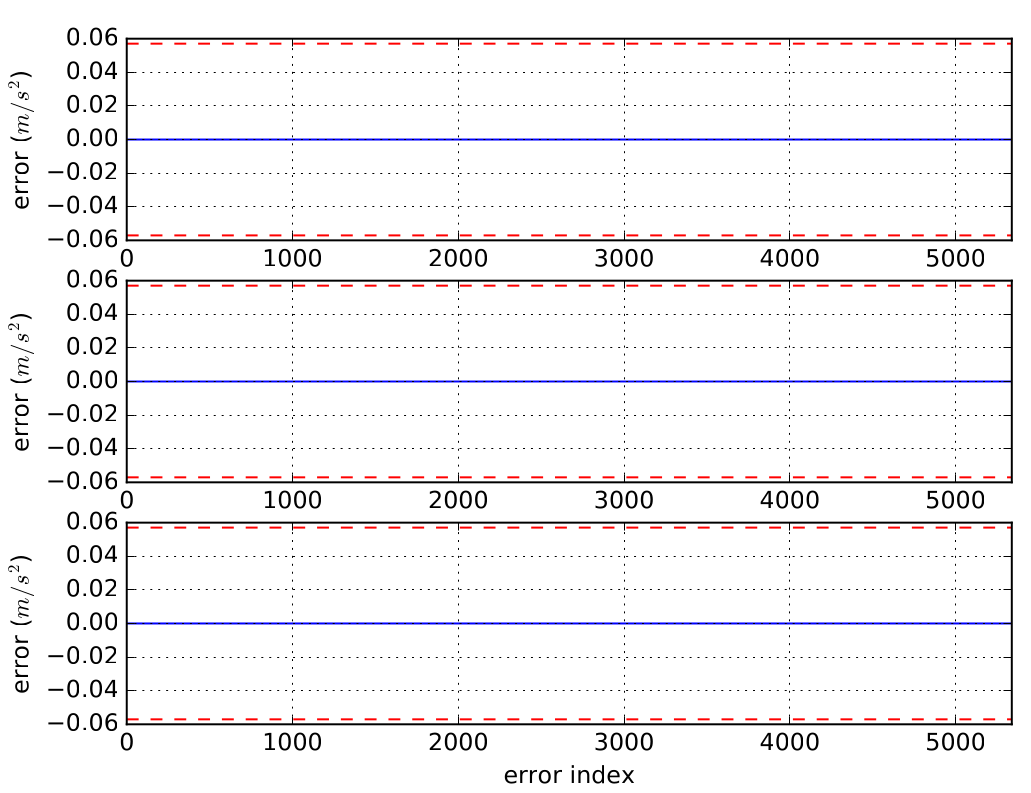}\label{fig:camimu_2}}\\
\subfloat[Accelerometer error]{\includegraphics[width=0.48\linewidth]{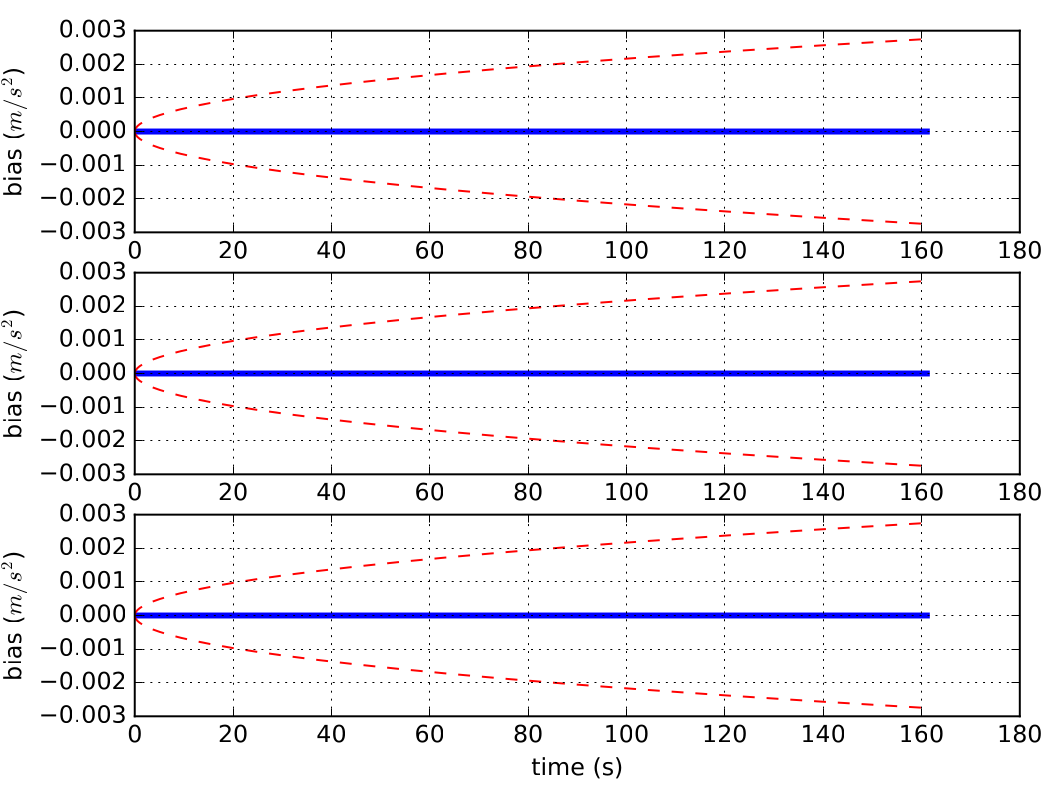}\label{fig:camimu_3}}
\hfill
\subfloat[IMU rate fit]{\includegraphics[width=0.48\linewidth]{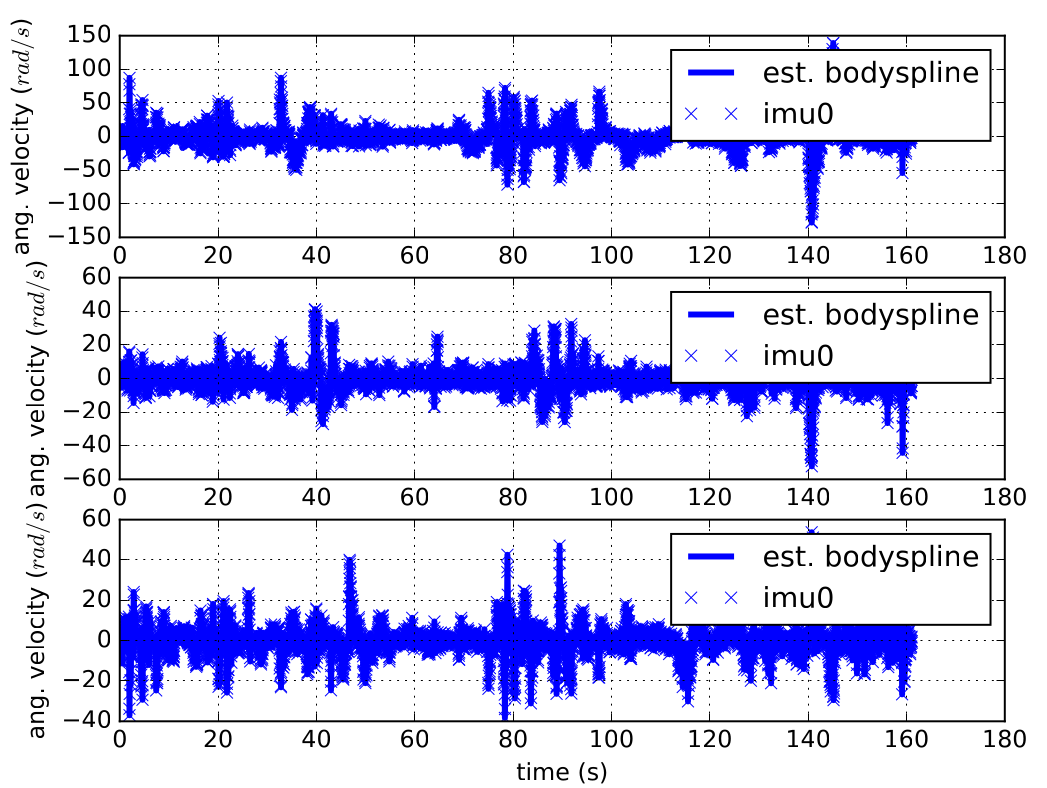}\label{fig:camimu_4}}
\caption{Kalibr camera--IMU calibration diagnostics, (a)~reprojection error,
(b)~gyroscope residuals, (c)~accelerometer residuals, (d)~IMU angular
velocity fit.}
\label{fig:cam_imu_calib}
\end{figure}

\end{document}